\newcommand{\cD}{\mathcal{D}}
\newcommand{\bbR}{\mathbb{R}}
\newcommand{\Ex}{\mathbb{E}}
\newcommand{\eps}{\varepsilon}
\newcommand{\IGNORE}[1]{}
\newcommand{\tildeX}{\tilde{X}}
\newcommand{\tx}{\tilde{x}}
\newcommand{\tX}{\tilde{X}}
\newtheorem{theorem}{Theorem}
\newtheorem{lemma}{Lemma}
\newtheorem{remark}{Remark}
\newtheorem{assumption}{Assumption}
\newtheorem{cor}{Corollary}
\newtheorem{example}{Example}
\newtheorem{condition}{Condition}
\newtheorem{proposition}{Proposition}
\DeclarePairedDelimiter{\ceil}{\lceil}{\rceil}
\algnewcommand{\LineComment}[1]{\State \(\triangleright\) #1}
\renewcommand{\ip}[2]{\langle #1, #2 \rangle}
\let\tilde\widetilde
\newcommand{\diag}{\mathtt{diag}}
\newcommand{\tO}{\widetilde{O}}
\begin{document}

% If your paper is accepted and the title of your paper is very long,
% the style will print as headings an error message. Use the following
% command to supply a shorter title of your paper so that it can be
% used as headings.
%
%\runningtitle{I use this title instead because the last one was very long}

% If your paper is accepted and the number of authors is large, the
% style will print as headings an error message. Use the following
% command to supply a shorter version of the author names so that
% they can be used as headings (for example, use only the surnames)
%
%\runningauthor{Surname 1, Surname 2, Surname 3, ...., Surname n}

\twocolumn[

\aistatstitle{Learning Under Moral Hazard with Instrumental Regression and Generalized Method of Moments}

\aistatsauthor{ Shiliang Zuo }

\aistatsaddress{ szuo.rs@gmail.com } ]

\begin{abstract}
Machine learning has become increasingly popular in informing data-driven policy-making. Policies influence behavior in individuals or populations, and ideally, through observational signals, policy-makers learn which policies are effective. However, in many settings, individual actions cannot be perfectly observed. This issue, known in economics as moral hazard, poses a significant challenge. In this work, we study the foundational multitasking principal–agent contract design problem and demonstrate how instrumental regression and the generalized method of moments (GMM) estimator can be used to estimate or learn a good contract. As a bonus result, we also give a uniformity characterization of the shape of the optimal contract. 
\end{abstract}

\section{INTRODUCTION}
\label{sec:intro}
Machine learning has become a powerful tool for data-driven decision-making across many real-world applications, including ridesharing (\cite{qin2022reinforcement}), education policy-making (\cite{hilbert2021machine}), healthcare policy-making (\cite{ashrafian2018transforming}), and credit scoring (\cite{fuster2022predictably, hurley2016credit}), among others. A central assumption in most machine learning methods is that the causal relationship between inputs and outcomes is well-specified and observable. However, in many real-world policy-making scenarios, the dependence between signals and outcomes is often complex and difficult to model. In particular, in economic and strategic environments, this assumption breaks down: data are generated by self-interested agents, whose underlying actions drive outcomes but remain hidden. As a result, the true causal links between observed signals and unobserved behavior are confounded, posing fundamental challenges for learning. 

Consider an educational setting where we are interested in measuring students’ skills to predict their future success (however defined). While some information is observable—such as educational background—many important features remain hidden, such as problem-solving or critical thinking skills. These latent skills can only be indirectly inferred through noisy signals like standardized test scores. In this case, the covariates of interest are not perfectly observed. Moreover, policymakers may wish to design better education policies based on students’ data, but doing so requires addressing the fact that underlying behaviors and abilities are only imperfectly captured.

As another example, consider a vehicle insurance company seeking to design policies for its customers. A policyholder’s driving behavior is influenced by the insurance plan: some individuals may become more cautious, while others may drive more recklessly, depending on the policy they participate in. The insurer’s objective is to shape behavior in a way that maximizes profit. Yet, driving behavior cannot be directly observed; instead, the insurer only has access to noisy outcomes such as accident reports. Despite this limitation, the company must still rely on historical data to design effective insurance policies.

This type of challenge is known as moral hazard in economics (\cite{holmstrom1979moral}). Moral hazard arises when an agent’s actions are hidden from the principal, making it difficult to align incentives. In this work, we initiate the study of data-driven methods for decision-making under moral hazard, where optimal policies must be learned from observational signals that depend on unobserved actions.

We focus on the multitasking principal–agent problem, a canonical model of contract design capturing the salient feature of moral hazard (\cite{holmstrom1991multitask}). Contract design is an important topic in economics: it captures how a principal can incentivize an agent to exert effort in working on tasks by linking payments to observable signals. In the multitasking principal-agent problem, the principal hires an agent to perform several tasks. The principal's utility depends on the agent's hidden action; however the effort the agent puts into each task can only be measured through a noisy signal. 

In this work, we study how to learn effective contracts for the principal under such information constraints. Our approach combines tools from economics, econometrics, and machine learning, and in particular demonstrates how econometric methods such as instrumental regression and the generalized method of moments (GMM) can be adapted to address endogeneity and measurement error in this strategic environment. 

Apart from our learning results, we also provide a fairness characterization of optimal contracts. In many real-world settings, the principal interacts with multiple agents under a common contractual framework; for instance, franchising arrangements \cite{bhattacharyya1995double} or revenue-sharing platforms such as YouTube, where content creators receive 55\% of advertising revenue while the platform retains 45\% \footnote{See \cite{YouTubeShare}. }. While one might expect the principal to customize terms for different agents, we show that when the agent’s cost function exhibits homogeneity, the optimal contract depends only on its degree of homogeneity. This implies that uniform contracts can simultaneously maximize the principal’s utility and reinforce fairness across agents.

% Apart from the results on learning, we also gave a fairness characterization on the optimal contracts. In many real-world scenarios, the principal engages with multiple agents under the same contractual framework. A prominent example is franchising, where the principal (franchiser) contracts with multiple franchisees to ensure consistent effort and revenue generation \cite{bhattacharyya1995double}. Another example is e-commerce platforms such as YouTube, where content creators earn a share of advertising revenue. According to official documentation \cite{YouTubeShare}, YouTube content creators receive 55\% of the total advertisement revenue, while the remaining 45\% is retained by the platform. While one might expect the principal to tailor contracts for different agents to maximize utility, it is often observed that contracts remain uniform across agents. To answer this question, we show that when the agent’s cost function exhibits a certain degree of homogeneity and the principal’s utility is linear across tasks, the optimal contract is necessarily uniform across agents. Specifically, the optimal contract depends on the agent's cost function \emph{only through} it's the homogeneity degree. This result has strong practical implications: it suggests that the principal does not need to differentiate contracts to maximize utility, reinforcing fairness in contract design. Our findings are similar in spirit to the work of \cite{bhattacharyya1995double}, who examine a single-task setting with double moral hazard. In contrast, our study focuses on multitask environments. 

\subsection{Contributions} In this work, we study multitasking principal-agent. We first give a \emph{fairness characterization} of the optimal contract. We show that under a homogeneity assumption on the agent’s cost function, the optimal linear contract depends only on the degree of homogeneity and is uniform across agents. This provides theoretical insight and practical justification for standardized contracts in settings like franchising and online platforms.

We then study the multitasking principal-agent problem from a \emph{machine learning} perspective, we show how tools from econometrics, particularly instrumental regression and the generalized method of moments (GMM), can be applied in this setting in designing an effective contract. To our knowledge, this is the first work to examine the multitasking principal–agent problem through a machine learning lens. First, we identify contract learning under moral hazard as a regression problem with measurement error and show how instrumental regression, together with the generalized method of moments (GMM) estimator, can be used to recover the unknown parameters and thereby identify the optimal contract. Second, we demonstrate that when repeated signals are available and agents are sufficiently diverse, the principal can achieve significantly faster convergence rates. 

\subsection{Related Work}

\paragraph{Machine Learning in Strategic Contexts} Machine learning increasingly operates in environments where data are generated by self-interested agents. Recent work introduced the study of strategic classification (\cite{hardt2016strategic}) and performative prediction (\cite{perdomo2020performative}), showing how individuals may manipulate features in response to predictive models. Following this, researchers have highlighted endogeneity issues in strategic settings and developed instrumental regression approaches for regression with strategic responses (\cite{harris2022strategic}), online learning and bandits (\cite{della2023online}), and reinforcement learning with hidden actions (\cite{yu2022strategic}). More broadly, recent work emphasizes the causal perspective in machine learning systems (\cite{horowitz2023causal}, \cite{miller2020strategic}), motivated by real-world applications where decisions shape behavior and the data reflect these strategic interactions. 

\paragraph{Contract Theory} Contract theory studies a principal-agent problem where the principal must incentivize the agent to exert effort through contracts. Some early important works in the economics community include \cite{holmstrom1979moral, holmstrom1982moral, holmstrom1991multitask}. The work by \cite{holmstrom1991multitask} studies the multitask principal-agent problem, which serves as the starting point of the current work. Recently, computational aspects of contract design have been studied by many works in the computer science community. For example, \cite{dutting2019simple, duetting2024multi, dutting2022combinatorial} study contract design from a combinatorial perspective; various other works \cite{ho2014adaptive, zuo2024harnessing, guruganesh2024contracting, Zhu2022TheSC} study learning algorithms for contract design. Another line of work seek to understand the worst-case guarantee of contracts (e.g. \cite{carroll2015robustness}), and show that linear contracts have remarkable worst-case performance guarantees. Apart from these, contract theory has also appeared in application domains such as signal processing (\cite{jain2023adaptive}).

% More generally speaking, some recent work study the causal aspect of machine learning, in particular in societal decision-making contexts\cite{horowitz2023causal}. 

\paragraph{Bandit Problems} Bandit problems are a type of online learning problem with partial feedback. Typically, bandit algorithms need to balance exploration (experimenting with unknown actions) and exploitation (choosing actions whose reward is estimated to be higher). Research on bandit problems is too broad to cover here, but for an overview see \cite{lattimore2020bandit}. Recently, a number of papers have sought to understand the performance of the pure exploitation (i.e., greedy) algorithm in bandit problems. In particular, for linear bandits, the success of the greedy algorithm is explained under the framework of smoothed analysis \cite{bastani2021mostly, kannan2018smoothed, sivakumar2022smoothed}. 

\paragraph{Instrumental Regression and Measurement Error Models. }
Measurement error models are a well-studied topic in econometrics and statistics, as noisy or imperfectly observed covariates can lead to biased and inconsistent estimates. A standard remedy is the use of instrumental variables, where an observed variable correlated with the true covariate but independent of the error serves as a proxy. The generalized method of moments (GMM) framework further offers a flexible way to construct consistent estimators in the presence of endogeneity. These methods are well established in econometrics; see \cite{alma99653821012205899,fuller2009measurement} for textbook treatments. % More broadly, IV and GMM are powerful tools for addressing causal inference with endogenous variables. Recent work in computer science has increasingly explored causal perspectives in machine learning, including strategic classification \cite{miller2020strategic}, performative prediction \cite{perdomo2020performative}, and bandit problems with unobserved confounding \cite{kallus2018instrument}. 

% \paragraph{Exploration-Free Algorithms and Smoothed Analysis}In the bandit community, there is recently a interest in studying the behavior of greedy algorithms. In particular, in linear bandits, the success of the greedy algorithm can be explained via smoothed analysis \cite{sivakumar2022smoothed, bastani2021mostly, kannan2018smoothed}. 

\section{\MakeUppercase{The Multitasking Principal-Agent Problem}}
In the multitasking principal-agent problem, there are two parties, a principal and an agent. The principal asks the agent to complete several tasks. The agent exerts effort across $d$ dimensions, each representing a distinct task (so there are $d$ tasks). We denote the agent’s effort by the vector $a \subset (\mathbb{R}^+)^d$. 

The agent incurs a private cost $c(a) \in \bbR^+$ when exerting the effort vector $a$. We will assume the cost function $c(a)$ is strictly increasing, strictly convex, and continuously differentiable. While the principal cannot directly observe or verify the exact effort level $a$, she can observe a signal $x \in \bbR^d$; the signal can be interpreted as a noisy measurement of the agent's true effort vector, in particular, the signal $x_i$ can represent a noisy measurement for the true effort $a_i$ in task $i$. We shall assume the signal is unbiased:
\[
\Ex[x|a] = a. 
\]
The unbiasedness is a common assumption (e.g. \cite{holmstrom1991multitask}). 

The principal incentivizes the agent to exert effort through a contract. We focus on linear contracts, parameterized by $\beta \in (\mathbb{R}^+)^d$. Under the linear contract $\beta$, when the observed signal is $x$, the payment transferred from the principal to the agent is $\ip{\beta}{x}$; in expectation, the transfer is equal to $\ip{\beta}{a}$. 

\paragraph{Agent's Response} The agent's utility is his expected payment minus his private cost. The agent selects an action $a = a(\beta)$ that maximizes their expected utility given the contract terms: 
\[
a(\beta) = \arg\max_{a} \ip{\beta}{a} - c(a). 
\]

\paragraph{Principal's Utility}

The agent's effort $a$ gives a noisy private benefit $y(a)$ to the principal. We assume this private benefit takes a linear form
\[
\Ex[y(a) | a] = \ip{\theta^*}{a}. 
\]
The principal's expected utility is the expected private benefit minus the expected payment to the agent. In other words, when the contract offered is $\beta$ and the agent best responds with the action $a$, the principal's expected utility is
\[
u(\beta; a) = \ip{\theta^*}{a} - \ip{\beta}{a}. 
\]
We may also write $u(\beta) := u(\beta, a(\beta))$; here the variable $a$ is suppressed and implicitly understood as the agent's best response to $\beta$. The principal's optimal contract $\beta^*$ is then the contract maximizing her expected utility:
\[
\beta^* \in \arg\max_{\beta} u(\beta). 
\]
% In the expression below, letting $a$ be the best response to the contract $\beta$, the principal's expected utility is then
% \[
% u(\beta) = \Ex[y(a)] - \ip{\beta}{a}. 
% \]

% Then, under the contract $\beta$, assuming the agent takes his best action $w$, the principal's expected utility is
% \[
% \ip{\theta^*}{a} - \ip{\beta}{a}, 
% \]
% i.e., the expected private benefit minus the expected payment to the agent. The optimal contract is the contract maximizing the principal's expected utility. 

\paragraph{Interaction} The interaction protocol can be summarized as follows. 
\begin{enumerate}
	\item The principal posts a contract $\beta \in \mathbb{R}^d$
	\item The agent best responds with a private effort (action) $a \in \mathbb{R}$, and incurs private cost $c(a)$
	\item A noisy signal $x$ is generated and that $\Ex[x|a] = a$
	\item The agent's expected utility is $\ip{\beta}{a} - c(a)$
	\item The principal's private realized benefit is $y$ with $\Ex[y|a] = \ip{\theta^*}{a}$, the expected utility $u(\beta; a) = \ip{\theta^*}{a}- \ip{\beta}{a}$. 
\end{enumerate}
Note that from the principal's point of view, apart from the decision variable $\beta$, only the signal $x$ and the realized private benefit $y$ are observed; the agent's true effort $a$ is not observed. The causal relation between the variables is summarized in \Cref{fig:causalFig}. 

\szcomment{begin revision content. Add discussion here. }
Before we present our main technical results, let us first clarify how our model captures the essential features of moral hazard. In its modern economic usage, moral hazard refers to settings where hidden actions create incentive misalignment between contracting parties. Our multitasking contract design problem exhibits the core characteristics of moral hazard: (1) the agent's action is unobservable and non-contractible, (2) actions affect outcomes but only noisy signals are observed, and (3) contracts must be designed based on observable signals to incentivize hidden actions.

These features are present in our motivating examples (\Cref{sec:intro}). In the insurance setting, driving behavior is hidden from the insurer, who observes only noisy signals such as accident reports and claims costs; the insurer must design policies that incentivize safe driving to maximize profit. Similarly, in the education setting, student effort and true skills are latent, while policymakers observe only noisy test scores and long-term outcomes; they must design policies that incentivize appropriate effort allocation across different subjects or skill areas.

As a final remark, we note that the unbiasedness assumption on signals specifies the information structure but does not make actions observable or contractible. The moral hazard remains intact: the principal cannot directly observe or write contracts contingent on the agent's effort $a$. Although unbiasedness is a common assumption in the multitasking literature (e.g. \cite{holmstrom1991multitask}), extending our results in this work to accommodate possibly biased signals is also a valuable future direction. 
\szcomment{end revision content. }

\section{\MakeUppercase{Uniformity of the Optimal Linear Contract}}

We first give a characterization of the optimal linear contract. We present a uniformity result, which states that if the agent's cost function exhibits a certain degree of homogeneity (i.e. a consistent return to scale), then the optimal linear contract depends on the cost function only through its homogeneity degree. 
% \begin{definition}
% The cost function $c$ is said to satisfy the Inada conditions if the following holds. 
% \begin{enumerate}
% \item $c$ is convex on its domain. 
% \item $\lim_{a_i \rightarrow 0} \partial c(a) / \partial a_i = 0$. 
% \item $\lim_{a_i \rightarrow +\infty} \partial c(a) / \partial a_i = +\infty$. 
% \end{enumerate}
% \end{definition}
% \begin{remark}
% The Inada condition is fairly standard. It essentially means the marginal cost goes from $0$ to $+\infty$ as $a_i$ goes from $0$ to $+\infty$. 
% \end{remark}

\begin{assumption}
\label{assump:cost}
The cost function of the agent is homogeneous of degree $k$. In other words, for any $\rho > 0$ and $a\in (\bbR^d)^+$, $c(\rho a) = \rho^k c(a)$. 
\end{assumption}
\begin{remark}
The homogeneity assumption was also made in a related line of work in strategic classification \cite{shavit2020causal, dong2018strategic}. Another work on learning from revealed preferences in Stackelberg games also make a homogeneity assumption \cite{roth2016watch} (see the subsequent \Cref{remark:watch-and-learn-compare}). 
\end{remark}
% \begin{remark}
% The parameter $k$ is also known as the return to scale parameter. 
% \end{remark}

% \begin{example}
% A natural class of cost functions which are homogeneous is the class of CES disutility functions. 
% \end{example}

% We also assume the private benefit to the principal depends only on the agent's true effort vector $a$ through a linear model. 
% \begin{assumption}
% There exists $\theta^*\in (\bbR^d)^+$, such that
% \[
% \Ex[ y(a) ] = \ip{\theta^*}{a}. 
% \]
% \end{assumption}
% The value $\theta^*_i$ can be interpreted as the marginal utility of task $i$ to the principal. 

We show the optimal linear contract depends on the shape of $c(\cdot)$ only through the degree of homogeneity. The proof is very simple, yet to the best of our knowledge, it has not appeared in prior work. 

\begin{theorem}
\label{thm:uniformity}
The principal's optimal contract is $\beta^* = \theta^* / k$. 
\end{theorem}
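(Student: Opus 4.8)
The plan is to exploit the first-order conditions that characterize both the agent's best response and the principal's optimal choice, and then use Euler's theorem for homogeneous functions to collapse everything into the clean relation $\beta^* = \theta^*/k$. First I would write the agent's best-response condition: since $c$ is strictly convex and differentiable, $a(\beta)$ is the unique solution of $\beta = \nabla c(a)$. The homogeneity of $c$ gives us a powerful lever here. Differentiating $c(\rho a) = \rho^k c(a)$ in $a$ shows that $\nabla c$ is homogeneous of degree $k-1$, i.e. $\nabla c(\rho a) = \rho^{k-1}\nabla c(a)$. Combined with Euler's theorem, $\ip{\nabla c(a)}{a} = k\, c(a)$, which will let me turn inner products of the form $\ip{\beta}{a(\beta)}$ into multiples of $c(a(\beta))$.

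Next I would substitute the agent's response into the principal's objective and optimize. Writing $u(\beta) = \ip{\theta^*}{a(\beta)} - \ip{\beta}{a(\beta)}$, I would like to reparametrize the principal's problem directly in terms of the induced action $a$. Because $\nabla c$ is a bijection from $(\mathbb{R}^+)^d$ onto its image (strict convexity), choosing $\beta$ is equivalent to choosing $a$, with $\beta = \nabla c(a)$. So the principal's problem becomes $\max_{a} \ip{\theta^*}{a} - \ip{\nabla c(a)}{a} = \max_a \ip{\theta^*}{a} - k\,c(a)$, where the last equality is exactly Euler's theorem. Now this is a clean unconstrained concave maximization in $a$ (concave since $c$ is convex and $k>0$): the first-order condition is $\theta^* = k\,\nabla c(a)$. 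But $\nabla c(a) = \beta$ by the agent's response, so $\theta^* = k\beta$, giving $\beta^* = \theta^*/k$ as claimed.

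The main obstacle — really the only subtlety — is justifying the change of variables and the concavity/interiority of the reparametrized problem: one must check that $\nabla c$ is a genuine bijection onto the relevant domain so that optimizing over $\beta$ and over $a$ are equivalent, that the objective $\ip{\theta^*}{a} - k c(a)$ is concave (immediate from convexity of $c$ and $k > 0$), and that the stationary point lies in the positive orthant so the first-order condition is valid (this should follow from $\theta^* \in (\mathbb{R}^+)^d$ together with monotonicity of $\nabla c$, with the implicit assumption that an interior optimum exists). I would also state Euler's theorem and the degree-$(k-1)$ homogeneity of $\nabla c$ as a short preliminary lemma, since both are used crucially. Everything else is a direct substitution, which matches the paper's remark that the proof is very simple.
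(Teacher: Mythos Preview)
Your proposal is correct and follows essentially the same route as the paper: use the agent's first-order condition $\beta = \nabla c(a)$ to reparametrize the principal's problem in the induced action $a$, apply Euler's theorem to rewrite $\ip{\nabla c(a)}{a} = k\,c(a)$, and then take the first-order condition of the resulting concave objective $\ip{\theta^*}{a} - k\,c(a)$ to obtain $\theta^* = k\nabla c(a) = k\beta$. The paper's proof is slightly terser (it does not separately record the degree-$(k-1)$ homogeneity of $\nabla c$ or discuss the bijection/interiority issues you flag), but the substance is identical.
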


\begin{proof}
Suppose the principal posts the contract $\beta$. By the first-order condition of the agent's best response, we have:
\[
\beta = \nabla c(a). 
\]
Then, the principal's utility function when posting the contract $\beta$, as a function of the best response $a$ can be written as 
\begin{align*}
\ip{\theta^*}{a} - \ip{\beta}{a} &= \ip{\theta^*}{a} - \ip{\nabla c(a)}{a} \\
&= \ip{\theta^*}{a} - k c(a),
\end{align*}
where we used Euler's theorem on homogeneous functions. Notice that the principal's utility function is a concave function in the agent's hidden effort $a$. Taking derivative with respect to $a$, we have
\[
\theta^* = k\nabla c(a). 
\]
Combined with the agent's first-order condition, we have that at the optimum we must have 
\[
\beta^* = \theta^* / k. \qedhere
\]
\end{proof}

\begin{remark}
The uniformity result is particularly attractive from a practical perspective. It implies that the principal can achieve her maximum utility and ensure fairness simultaneously. In particular, there is no need to discriminate against agents whose productivity may differ (as long as their return to scale parameter $k$ remains the same). This standardization is particularly relevant in industries like franchising or E-commerce, where the principal (the franchiser or the E-commerce platform) contracts with \emph{multiple} agents (franchisees, E-commerce workers / content creators). 
\end{remark}

\begin{remark}
The uniformity result here is similar in flavor to \cite{bhattacharyya1995double}; the difference is that their work considers a double moral hazard setting with a single task. They also make a similar homogeneous degree assumption, but of course, in the single-dimensional case, a homogenous function can only take the polynomial form. 
\end{remark}

% The results may also be particularly relevant in E-commerce scenarios. For example, consider YouTube, the video-sharing platform. According to this online article (\cite{YouTubeShare}), creators receive 55\% of the advertising revenue while the platform retains 45\%. While different creators have different abilities in generating revenue, the contract remains the same for different content creators. 

% As a special case, the results explain the widespread 50-50 split rule seen in many scenarios (e.g., \cite{allen1993transaction}). Specifically, when the agents have a quadratic cost function ($k = 2$), the results indicate that the 50-50 split is precisely the optimal linear contract which ensures maximum utility for the principal. 
\begin{remark}
\label{remark:watch-and-learn-compare}
The recent work \cite{roth2016watch} in fact studied a very similar setup with the same homogeneity assumption. Their work studied optimization algorithms in Stackelberg games from revealed preferences. As a special case, they gave a discussion on how their algorithm can be applied to the multitask principal-agent problem (though their work did not explicitly identify their problem as such). They also make a homogeneity assumption; however, the uniformity result that we presented here seemed to have somehow escaped their analysis. 
\end{remark}
%The problem of learning the optimal contract is also studied in this work, however, the approach is fundamentally different from theirs. This is because the current work also identified a ``uniformity'' result, which seemed to escape their analysis. The problem of estimating and learning the optimal contract in this work uses instrumental regression with the generalized method of moments method and is based on this ``uniformity'' result, whereas in their work the problem is solved via a two-stage optimization algorithm. 

% As a remark, the same problem setup in fact has been considered in the prior work \cite{roth2016watch}. However, this uniformity result seems to have escaped their analysis. As a consequence, while in both work the problem of learning the optimal contract is studied, the approach is quite different. 

% In comparison to previous work by Roth et al. (2016) \cite{roth2016watch}, the same problem setup has been considered, but the uniformity result seems to have escaped their analysis. Consequently, although both studies address the challenge of learning the optimal contract (this is the following sections in the current work), the methodologies employed are markedly different.

% Finally, the uniformity result presented here follows closely in spirit to a result in the important paper \cite{bhattacharyya1995double}. The difference is that in their setting, they consider a single task but with double moral hazard (i.e., effort from the principal is also required, and effort from both sides is unverifiable). 

\section{\MakeUppercase{Estimating and Learning the Optimal Contract: an Instrumental Regression Approach}}
\label{sec:contractIV}

\begin{figure}[t]
    \centering
    \includegraphics[width=.50\textwidth]{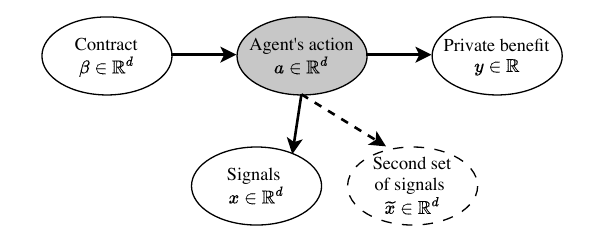} % Replace 'example-image' with your image file name
    \caption{Causal relationship between variables. The agent's response is shaded indicating it is unobserved by the principal. The dashed item is a second set of observed signals, corresponding the the scenario studied in \Cref{sec:repeated} when repeated observations are available. }
    \label{fig:causalFig}
\end{figure}

In the previous section we gave a characterization of the optimal contract and showed that even for different agents with different cost functions, the optimal contract remains the same as long as the homogeneity degree remains the same. Now, we study the problem of estimating or learning the optimal contract when the parameter $\theta^*$ is unknown. Informally, when the parameter $\theta^*$ is unknown, the principal lacks precise knowledge of the ``importance" of each task. For instance, when a firm's owner (the principal) hires a manager (the agent) to oversee multiple tasks, the owner may be uncertain about the exact contribution of each task to the firm's overall performance. In such cases, the principal must estimate or learn $\theta^*$ using observational data. 

We assume the principal repeatedly interacts with potentially different agents, and denote the observed data in the $t$-th interaction as $(\beta_t, x_t, y_t)$, representing the contract, observed signal, and private benefit, respectively; the agent's private effort $a_t$ is unobserved by the principal. In a single round of interaction, the relationship between the variables is depicted in \Cref{fig:causalFig}. 
Similar to the previous section, it is assumed the observed signals for each task is unbiased:
\[
\Ex[x_t | a_t] = a_t,
\]
and that the total revenue satisfies a linear relation:
\[
\Ex[ y_t |a_t ] = \ip{\theta^*}{a_t}. 
\]

In repeated interactions, we allow the agent in each round to be of different types. In particular, different agents may have different cost functions and react differently to the same contract. 
However, we assume that the homogeneity degree parameter $k$ is known and treated as a constant. In other words, the return to scale parameter is a constant for different agent types. \footnote{While one could also study the scenario where $\theta^*$ is known but $k$ is unknown, this case seem less interesting from a technical perspective; since $k$ is a single-dimensional parameter, one can apply a one-dimensional discretization-based algorithm and infer the value of $k$ (\cite{kleinberg2004nearly}). Thus, we focus on the case where the high-dimensional parameter $\theta^*$ is unknown while $k$ is known and fixed, which we believe to be the more technically interesting problem. } Informally, this means that even though agent's may have different ``proficiency" in completing the tasks, the scaling behavior of the cost function is consistent. 

Note that in general, in particular when the homogeneity \Cref{assump:cost} does not hold, recovering the unknown parameter $\theta^*$ will not inform the shape of the optimal contract, since the agent's cost function is unknown. However, thanks to the homogeneity assumption and the uniformity result, which establishes the optimal contract as given by $\beta^* = \theta^* / k$, obtaining an accurate estimate for $\theta^*$ is equivalent to identifying an approximately optimal contract.

We consider both the offline and online setting. In the offline setting, the principal has access to data in $T$ time periods, with the observed data denoted as $(\beta_t, x_t, y_t)$; the goal is then to obtain an accurate estimate of $\theta^*$. In the online setting, the principal interacts with agents in a sequential manner, and in each round $t$ the principal chooses a contract $\beta_t$ based on historical observations; the goal here is to minimize cumulative utility loss.

We assume the contract $\beta$ belongs to some bounded set $\mathcal{B}$, and that the optimal contract $\beta^* = \theta^* / k \in \mathcal{B}$. We refer to $\mathcal{B}$ as the feasible contract set; it can represent either external factors (such as certain law or regulations) which limit the set of contracts that can be enforced, or it can represent the fact that the principal has some limited prior knowledge that the optimal contract must belong to some bounded set (for example, he may at least have an lower and upper bound of each component of $\theta^*$). 

% Note that the problem of learning $\theta^*$ bears some resemblance to the linear bandit problem, where seemingly the signal vector $x$ represents the context and the principal's private benefit $y$ represents the noisy realized reward. 

% Our problem is in fact quite different. This is because, in fact, the signal is a noisy realization of the true ``context", which is the agent's private effort. Recall that $\Ex[y(a)] = \ip{\theta^*}{a}$, so $a$ is the true ``context" vector in the linear bandit problem. Essentially, our problem faces a measurement error problem. 

% We make a mild assumption that coordinates of $\theta^*$ are bounded, so that there is some bounded set $\mathcal{B}$ in which the optimal contract $\beta^*$ belongs to. This is relatively mild. Even though the principal may not know the exact value of $\theta^*$, she may have some reasonable bound on the marginal utility of each task, i.e., on $\theta^*_i$. For example, if the principal knows that each $\theta_i$ will be bounded in the interval $[c_0, C_0]$, then we can take $\mathcal{B} = [c_0/k, C_0/k]^d$. Without loss of generality we can perform normalization and assume the following holds. 
% \begin{assumption}
% There exists some set $\mathcal{B} \subset [0,1]^d$, such that $\beta^* \in \mathcal{B}$. 
% \end{assumption}

Finally, we assume the signals $x_t$ and realized private benefits $y_t$ are conditionally subgaussian. 
\begin{assumption}
Given the agent's action $a_t$, the random signal $x_t$ and private benefit $y_t$ are $\sigma_0$-subgaussian. 
\end{assumption}

We note that standard linear regression in fact fails in recovering the $\theta^*$. Treating the problem as a regression problem, the true covariates $a$ are not observed, and only a noisy measurement $x$ is observed. Hence, our problem in fact has an errors-in-variables problem, and must be solved using techniques from measurement error models. Below we show how the generalized method of moments estimator can be used in recovering $\theta$.

\subsection{Offline Setting}
The causal relationship between $(\beta_t, a_t, x_t, y_t)$ is summarized in \Cref{fig:causalFig} and forms a measurement error model. 
It can be observed that the contract $\beta_t$ acts as a valid instrumental variable. In particular, the following moment condition is satisfied
\begin{equation}
\Ex[\beta_t (y_t - \ip{\theta^*}{x_t})] = \mathbf{0} \ (\in \mathbb{R}^d). 
\end{equation}
Therefore an instrumental regression approach and the generalized method of moments estimator can be applied. When given a sequence of offline data, the generalized method of moments estimator takes the following form: 
\begin{equation}
\hat{\theta}_T = (B^\top_T X_T)^{-1} B^\top_T Y_T. \label{eq:GMM_v1}
\end{equation}

Here $B_T \in \bbR^{T \times d}$ is a matrix with rows representing the posted contracts each round; $X_T \in \bbR^{T \times d}$ is a matrix with rows representing the signals each round; $Y_T \in \bbR^{T}$ is a column vector representing the outcomes each round. 

% Now the principal essentially faces an optimal design problem with an exploration-exploitation tradeoff.  In the following, I propose two algorithms that achieve a sublinear regret. 

\begin{proposition}
\label{prop:offline-est-1}
With probability at least $1-\delta$, the estimation error
\[
\norm{\theta^* - \hat{\theta}_T}_2 \le \frac{\sqrt{dT\log(dT/\delta)}}{\sigma_{\min} (B_T^\top X_T) }. 
\]
\end{proposition}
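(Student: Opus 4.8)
The plan is to reduce the estimator's error to a martingale concentration bound after rewriting the structural model as a (mis-measured) linear regression. First I would introduce the noise terms $\eta_t := y_t - \ip{\theta^*}{a_t}$ and $\xi_t := x_t - a_t$, so that the modeling assumptions read $\Ex[\eta_t \mid a_t] = 0$ and $\Ex[\xi_t \mid a_t] = \mathbf{0}$, and substitute $a_t = x_t - \xi_t$ to obtain
\[
y_t = \ip{\theta^*}{x_t} + \eps_t, \qquad \eps_t := \eta_t - \ip{\theta^*}{\xi_t}.
\]
Treating $\beta_t$ as predictable with respect to the natural filtration $\cF_{t-1}$ (the contract of round $t$ is determined before that round's randomness is drawn), one checks $\Ex[\eps_t \mid \cF_{t-1}] = 0$, which is exactly the instrument moment condition $\Ex[\beta_t(y_t - \ip{\theta^*}{x_t})] = \mathbf{0}$ behind the GMM estimator; moreover $\eps_t$ is conditionally $\sigma$-subgaussian with $\sigma = O(\sigma_0(1 + \norm{\theta^*}_2))$, and since $\theta^* = k\beta^* \in k\mathcal{B}$ with $\mathcal{B}$ bounded, $\sigma$ is an absolute constant.

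Next I would derive the error identity. Stacking rounds gives $Y_T = X_T\theta^* + E_T$ with $E_T = (\eps_1,\dots,\eps_T)^\top$, so whenever $B_T^\top X_T$ is invertible,
\[
\hat\theta_T - \theta^* = (B_T^\top X_T)^{-1} B_T^\top E_T,
\]
and therefore $\norm{\hat\theta_T - \theta^*}_2 \le \norm{(B_T^\top X_T)^{-1}}_{\mathrm{op}}\,\norm{B_T^\top E_T}_2 = \norm{B_T^\top E_T}_2 / \sigma_{\min}(B_T^\top X_T)$. It remains to control the numerator $B_T^\top E_T = \sum_{t=1}^T \beta_t \eps_t \in \bbR^d$.

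For the concentration step I would argue coordinatewise: for each $j$, the partial sums $\sum_{t\le s}\beta_{t,j}\eps_t$ form a martingale whose increments are conditionally $(\abs{\beta_{t,j}}\sigma)$-subgaussian, and boundedness of $\mathcal{B}$ yields $\sum_{t=1}^T \beta_{t,j}^2 \le C^2 T$ deterministically. An Azuma--Hoeffding tail bound for subgaussian increments then gives $\abs{\sum_t \beta_{t,j}\eps_t} \lesssim \sigma C\sqrt{T\log(d/\delta)}$ with probability $1-\delta/d$; a union bound over the $d$ coordinates followed by summing squares gives $\norm{B_T^\top E_T}_2 \lesssim \sigma C\sqrt{dT\log(d/\delta)}$ with probability $1-\delta$, and absorbing the constants $\sigma,C$ and bounding $\log(d/\delta) \le \log(dT/\delta)$ yields the claimed estimate.

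The main obstacle is the first step: certifying that the composite noise $\eps_t$ — which superposes the outcome noise $\eta_t$ with the measurement error $\xi_t$ projected through $\theta^*$ — is a genuine conditionally mean-zero, conditionally subgaussian martingale difference for a filtration that makes $\beta_t$ predictable. This is what legitimizes using $\beta_t$ as a valid instrument despite the errors-in-variables structure (naive least squares on $(x_t,y_t)$ would be inconsistent here), and it is also where the boundedness of $\mathcal{B}$ must be invoked so that the subgaussian proxy of $\eps_t$ stays a constant; the remaining linear-algebra and tail-bound manipulations are routine.
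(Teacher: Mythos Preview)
Your proposal is correct and follows essentially the same route as the paper: define the residual $\eps_t = y_t - \ip{\theta^*}{x_t}$ (the paper calls it $\gamma_t$), derive the identity $\hat\theta_T - \theta^* = (B_T^\top X_T)^{-1} B_T^\top E_T$, and then bound $\norm{B_T^\top E_T}_2$ by a coordinatewise martingale/subgaussian concentration plus a union bound over $d$. The paper is terser---it simply invokes a vector concentration lemma from the appendix rather than spelling out the filtration and the subgaussianity of the composite noise---so your write-up is, if anything, more careful on the point you flagged as the ``main obstacle.''
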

\begin{proof}
Denote $\gamma_t = y_t - \ip{\theta^*}{x_t}$, and collect values for $\gamma_{1:t-1}$ into the column vector $\Gamma_t$. 
\begin{align*}
\hat{\theta}_T &= (B_T^\top X_T)^{-1} B_T^\top Y_T \\
&= (B_T^\top X_T)^{-1} B_T^\top (X_T \theta^* + \Gamma_T) \\
& = \theta^* + (B_T^\top X_T)^{-1} B_T^\top \Gamma_T. 
\end{align*}
Hence
\[
\norm{\theta^* - \hat{\theta}_T}_2 \le \frac{ \norm{B_T^\top \Gamma_T}  }{\sigma_{\min} (B_T^\top X_T) }. 
\]
By a standard concentration inequality (see Appendix \ref{app:concentration}) the numerator can be upper bounded as $\sqrt{dT\log(dT/\delta)}$ with probability at least $1-\delta$; the proposition then follows. 
\end{proof}

\subsection{Online Setting}
\begin{algorithm}[t]
\caption{Use Contract as IV: Explore then Commit Algorithm}
\label{alg:perturb}
\begin{algorithmic}
% \State Set $T_1 = 2$ and $\mathcal{T}_{i} = \mathcal{T}_{i-1}^4$, epoch $i$ consists of $\mathcal{T}_i$ rounds
\State Input: Distribution $\mathcal{P}$ supported on the feasible contract space $\mathcal{B}$ satisfying \Cref{condition:randomExploration}
\State Number of exploration rounds $\tau = d\sqrt{T}$
\For{$t = 1, \dots, \tau$}
    % \State // If using basis vector for `exploration':
    % \State Set $\beta_t = \mathbf{e}_i$, here $i = \ceil{t / \sqrt{T}}$
    \State Sample $\beta_t \sim \mathcal{P}$ as the contract
    \State Observe the signal $x_t$ and realized private benefit $y_t$
    % \State // If using small random perturbation for `exploration':
    % \State Set $\beta_t = (\theta_0 + \rho_t) / k$, here $\rho_t$ is a suitable random vector with variance $\sigma_0^2$ in each coordinate
    % % \State $\hat{\theta}_t = (B_{t}^\top X_t)^{-1} B_t^\top Y_t$
    % % \State $\beta_t = \hat{\theta}_t / k + \rho_t$, where $\rho\sim N(?, ?)$
    % \State Observe the signal $x_t\in \bbR^d$ and the total revenue $y_t \in \bbR$
\EndFor
\State Record the data so far into $B_{\tau}, X_{\tau}, Y_{\tau}$
\State Use the GMM estimator and compute the estimate:
\[
\hat{\theta} = (B_\tau^\top X_\tau)^{-1} B_{\tau} Y_\tau. 
\]
\For {$t = \tau + 1, \dots, T$}
\State Set $\beta_t = \hat{\theta} / k$ as the contract
\EndFor
\end{algorithmic}
\end{algorithm}
In the previous part, we developed an estimator based on GMM for estimating the parameter $\theta^*$. This applies to the setting when offline observations are available. However, it does not directly give rise to a learning algorithm in the online setting, where the principal needs to \emph{choose} the contract each round. To give some intuition, in this setting, the principal essentially faces an optimal design problem with an exploration-exploitation tradeoff. Specifically, the matrix $B_t$, consisting of contracts $\beta_t$, is the design, and the principal must choose this matrix such that the estimate $\hat{\theta}$ becomes accurate. At the same time, the principal must ensure that the chosen contracts $\beta_t$ are close to the true optimal contract $\beta^* = \theta^* / k$ in order to guarantee a small cumulative utility loss. We define the cumulative utility loss as the different between the utility achieved with the posted contract and that of posting the optimal contract each round:
\[
u_t(\beta^*) - u_t(\beta_t). 
\]

%%%%%%%%%%%%%%%%%%%%%
\iffalse
\subsubsection{Assumptions}
We impose the following assumptions. \begin{assumption}
	\label{assump:bound}
	Let $\Theta = [0, 1]^d$. Then
	\begin{enumerate}
		\item The unknown parameter $\theta^*\in \Theta$. 
		\item Fix any agent's cost function, let $u(\beta)$ be the principal's expected utility when posting contract $\beta$. Then $u''(\beta^*) < S_0$ for some constant $S_0$. 
		\item The signal and the reward are conditionally subgaussian, in particular, $x_t | w_t$ and $y_t | x_t, w_t$ are $\sigma^2$-subgaussian. 
		% \item Fix any agent's cost function satisfying \cref{assump:cost}, and let $u(\beta)$ denote the expected utility to the principal when posting contract $\beta$. There exists some constant $c$ so that for any $\hat{\theta} \in [c_0, C_0]^{n}$, the following holds
		% \[
		% \abs{ u(\beta^*) - u(\hat{\theta} / k) } \le c\cdot \norm{\beta^* - \hat{\theta} / k}_2^2. 
		% \]
	\end{enumerate}
\end{assumption}
Both parts of the assumption are natural and relatively mild. The first part is without loss of generality, since one can apply the suitable normalization. The second part of the assumption can essentially be seen as a smoothness assumption on the principal's utility function $u$. 

Then, suppose the principal posts a contract $\beta_t$ in round $t$, by \cref{assump:bound}, the utility loss can be bounded by the square estimation error (up to constant factors):
\[
\abs{ u(\beta^*) - u(\beta_t) } \le O \left( \norm{\beta^* - \beta_t}_2^2 \right). 
\]
\fi
%%%%%%%%%%%%%%%%%

\paragraph{Upper Bound} 
We show that a randomized exploration-type algorithm can achieve $O(d\sqrt{T})$ regret. We introduce the following condition. 

% In the first $d\sqrt{T}$ rounds, for each $i\in[d]$, the principal should set the contract as $\beta = \mathbf{e}_i$ for $\sqrt{T}$ rounds. The agents best response $a\in \bbR^d$ should only have a non-zero value in the $i$-th component $a_i$. Hence, the data in these rounds can be used to estimate $\theta_i$. In the remaining rounds, the principal can simply use the estimate $\hat{\theta}$ after the initial $d\sqrt{T}$ rounds, and post the contract as $\beta = \hat{\theta} / k$. 

\begin{condition}
\label{condition:randomExploration}
There exists a distribution $\mathcal{P}$ supported on the feasible contract space $\mathcal{B}$ and that there exists constant $c_1$, such that
\[
\sigma_{\min} (\Ex_{\beta\sim \mathcal{P}}[\beta\beta^T] ) \ge c_1 / d. 
\]
Further, fix any feasible agent's type and letting $a(\beta)$ denote this agent's best response function, we have
\[
\sigma_{\min} (\Ex_{\beta\sim \mathcal{P}}[a(\beta)a(\beta)^T] ) \ge c_1 / d. 
\]
\end{condition}
By sampling a contract from the distribution $\mathcal{P}$, it can be shown that in expectation the increase of $\sigma_{\min}(B_T^T X_T)$ by adding a new data point can be lower bounded, then by applying \Cref{prop:offline-est-1}, the estimates will converge to the true $\theta^*$. 

\begin{remark}
The above condition is in fact relatively mild. The first part requires the sampled contract comes from a diverse enough distribution. As an example, one can choose distribution $\mathcal{P}$ as the uniform distribution over $d$ linearly independent vectors. The second part of the condition is also satisfied as long as the agent's cost function is well-behaved such that variating $\beta$ gives sufficient variance along each direction in the agent's response. 
\end{remark}

\begin{proposition}
\label{prop:learningETC}
Suppose \Cref{condition:randomExploration} holds. There exists an algorithm (\Cref{alg:perturb}) achieving regret $\widetilde{O}(d\sqrt{T})$ with probability $1 - 1/T$. 
\end{proposition}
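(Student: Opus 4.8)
The plan is to analyze \Cref{alg:perturb} as a standard explore-then-commit scheme and split the cumulative utility loss into the cost incurred during the $\tau = d\sqrt{T}$ exploration rounds and the cost of the remaining $T-\tau$ commit rounds. For the exploration phase, since the feasible contract set $\mathcal{B}$ and the parameter $\theta^*$ are bounded, the per-round gap $u_t(\beta^*)-u_t(\beta_t)$ is $O(1)$, so this phase contributes at most $O(\tau)=O(d\sqrt{T})$ to the regret, no matter which contracts are drawn from $\mathcal{P}$.

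For the commit phase, every round $t>\tau$ plays $\beta_t=\hat\theta/k$, so by \Cref{thm:uniformity} the gap is $u_t(\beta^*)-u_t(\hat\theta/k)$. Since $\beta^*=\theta^*/k$ maximizes the concave map $\beta\mapsto u_t(\beta)$ and the best-response map $\beta\mapsto a(\beta)$ is smooth near $\beta^*$ (here a mild smoothness/curvature bound on $u_t$ around its maximum, uniform over agent types, is invoked), a second-order Taylor expansion with vanishing first-order term gives $u_t(\beta^*)-u_t(\hat\theta/k)=O(\|\hat\theta/k-\beta^*\|_2^2)=O(\|\hat\theta-\theta^*\|_2^2)$. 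Hence the commit-phase regret is $O\!\big(T\,\|\hat\theta-\theta^*\|_2^2\big)$, and the whole argument reduces to bounding the estimation error of the GMM estimate $\hat\theta$ built from the $\tau$ exploration samples.

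I would control the estimation error via \Cref{prop:offline-est-1} applied with horizon $\tau$: with probability $1-\delta$, $\|\hat\theta-\theta^*\|_2\le \sqrt{d\tau\log(d\tau/\delta)}/\sigma_{\min}(B_\tau^\top X_\tau)$. The crux is a high-probability lower bound $\sigma_{\min}(B_\tau^\top X_\tau)=\Omega(\tau/d)$. I would write $B_\tau^\top X_\tau=\sum_{t\le\tau}\beta_t x_t^\top=\sum_{t\le\tau}\beta_t a_t^\top+\sum_{t\le\tau}\beta_t(x_t-a_t)^\top$. The second sum is a martingale of conditionally mean-zero, $\sigma_0$-subgaussian rank-one terms, so a matrix Freedman/Azuma inequality bounds its operator norm by $\widetilde O(\sqrt\tau)$, which is lower order once $\tau\gg d^2$. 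For the first sum, because $\beta_t\sim\mathcal{P}$ is drawn i.i.d.\ and independently of the round-$t$ agent type, its conditional mean is $\Ex_{\beta\sim\mathcal{P}}[\beta\,a(\beta)^\top]$ for that type; combining the two halves of \Cref{condition:randomExploration} with the monotonicity of the gradient map $\beta=\nabla c(a)$ shows the (non-symmetric) population cross-moment is well-conditioned, $\sigma_{\min}\big(\Ex_{\beta\sim\mathcal{P}}[\beta\,a(\beta)^\top]\big)\gtrsim c_1/d$, and then matrix concentration of the i.i.d.\ sum yields $\sigma_{\min}\big(\sum_{t\le\tau}\beta_t a_t^\top\big)=\Omega(\tau c_1/d)$ with high probability. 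Combining the two terms gives $\sigma_{\min}(B_\tau^\top X_\tau)=\Omega(\tau/d)$.

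Finally, plugging $\tau=d\sqrt{T}$ and $\delta=1/T$ into the estimation bound makes $\|\hat\theta-\theta^*\|_2$ small enough that the $O(T\|\hat\theta-\theta^*\|_2^2)$ commit-phase regret balances the $O(\tau)$ exploration cost — which is exactly why the algorithm sets $\tau=d\sqrt{T}$ — giving total regret $\widetilde O(d\sqrt{T})$; a union bound over the two failure events (the concentration bound on $\|B_\tau^\top\Gamma_\tau\|$ from \Cref{prop:offline-est-1} and the matrix concentration used for $\sigma_{\min}$) delivers the $1-1/T$ success probability. I expect the main obstacle to be this lower bound on $\sigma_{\min}(B_\tau^\top X_\tau)$: specifically, converting the two separate curvature conditions of \Cref{condition:randomExploration} (one on $\Ex[\beta\beta^\top]$, one on $\Ex[a(\beta)a(\beta)^\top]$) into a condition-number bound on the non-symmetric cross-moment $\Ex[\beta\,a(\beta)^\top]$ uniformly over agent types, and then transferring that bound to the realized matrix in the presence of measurement noise and time-varying agents.
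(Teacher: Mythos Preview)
Your overall architecture matches the paper's: split into $\tau=d\sqrt{T}$ exploration rounds (each contributing $O(1)$ to $\mathtt{Reg}=\sum_t\|\beta^*-\beta_t\|_2^2$) and $T-\tau$ commit rounds whose total contribution is $O(T\|\hat\theta-\theta^*\|_2^2)$, then control the estimation error through \Cref{prop:offline-est-1}. The only substantive divergence is in how you lower-bound $\sigma_{\min}(B_\tau^\top X_\tau)$. The paper does \emph{not} decompose $B_\tau^\top X_\tau=\sum_t\beta_t a_t^\top+\sum_t\beta_t(x_t-a_t)^\top$ into signal plus noise; instead it invokes the product bound $\sigma_{\min}(B_\tau^\top X_\tau)\ge\sigma_{\min}(B_\tau)\,\sigma_{\min}(X_\tau)$ and applies matrix Chernoff (\Cref{cor:tropp}) to each factor separately, using the first half of \Cref{condition:randomExploration} for $\lambda_{\min}(\sum_t\beta_t\beta_t^\top)$ and the second half for $\lambda_{\min}(\sum_t x_tx_t^\top)$. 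This completely sidesteps the obstacle you flag---the non-symmetric population cross-moment $\Ex_{\beta\sim\mathcal P}[\beta\,a(\beta)^\top]$ never appears---though the product inequality $\sigma_{\min}(A^\top B)\ge\sigma_{\min}(A)\sigma_{\min}(B)$ is not valid for arbitrary tall $A,B\in\mathbb{R}^{\tau\times d}$ (take $\tau=2$, $d=1$, $A=e_1$, $B=e_2$), so the paper's shortcut carries its own gap. Your signal-plus-noise route is more careful and correctly isolates the crux; but, as you anticipate, monotonicity of $\nabla c$ alone does not obviously convert the two separate eigenvalue conditions of \Cref{condition:randomExploration} into a uniform lower bound on $\sigma_{\min}(\Ex[\beta\,a(\beta)^\top])$ across agent types, so that step would need an additional structural argument (or a strengthening of the condition) to go through.
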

Instead of working with the actual utility loss, we will be bounding the following quantity, which is the square error of $\beta_t$ each round:
\[
\mathtt{Reg} = \sum_{t=1}^T \norm{\beta^* - \beta_t}_2^2. 
\]
The above can serve as a proxy for the actual cumulative utility loss: $\sum_{t=1}^T \abs{u_t(\beta^*) - u_t(\beta_t)}$. In particular, as long as the function $u_t$ has a bounded second derivative at the global maximum $\beta^*$, the cumulative utility loss can be upper bounded by $O(\tt Reg)$. Then, each term in $\tt Reg$ can be bounded by \Cref{prop:offline-est-1}. 
The detailed proof can be found in \Cref{app:proofETC}. 

One can also consider a ``pure-exploration" problem where, instead of measuring the cumulative utility loss, one can measure the immediate estimation error after $T$ rounds. In this case, one can simply adapt \Cref{alg:perturb} to sample $\beta \sim \mathcal{P}$ in each of the $T$ rounds, and then the estimation error can be bounded by the following. 
\begin{proposition}
Assume Condition \ref{condition:randomExploration} holds. Then there exists a pure-exploration algorithm, that after $T$ rounds with probability $1 - \delta$ achieves: 
\[
\norm{\theta^* - \hat{\theta}_T}_2 \le \tilde{O}(\sqrt{d/T}). 
\]
\end{proposition}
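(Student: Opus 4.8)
The claim is essentially a corollary of the estimation guarantee \Cref{prop:offline-est-1} together with the spectral-growth argument already needed for \Cref{prop:learningETC}. The algorithm is to run the exploration phase of \Cref{alg:perturb} for all $T$ rounds: sample $\beta_t\sim\mathcal P$ i.i.d., observe $(x_t,y_t)$, and output the GMM estimate $\hat\theta_T=(B_T^\top X_T)^{-1}B_T^\top Y_T$ (there is no commit phase). By \Cref{prop:offline-est-1}, with high probability,
\[
\norm{\theta^*-\hat\theta_T}_2\le\frac{\sqrt{dT\log(dT/\delta)}}{\sigma_{\min}(B_T^\top X_T)},
\]
so everything reduces to showing $\sigma_{\min}(B_T^\top X_T)=\Omega(T/d)$ with high probability; the stated bound then follows by substitution and a union bound over the two events.

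To lower bound $\sigma_{\min}(B_T^\top X_T)=\sigma_{\min}(\sum_{t=1}^T\beta_t x_t^\top)$, write $x_t=a(\beta_t)+\xi_t$ with $\xi_t:=x_t-a(\beta_t)$, so that $\Ex[\xi_t\mid\beta_t]=0$ and $\xi_t$ is conditionally $\sigma_0$-subgaussian. This splits the matrix as $M_1+M_2$ with $M_1:=\sum_t\beta_t a(\beta_t)^\top$ and $M_2:=\sum_t\beta_t\xi_t^\top$. The term $M_2$ is a matrix martingale with conditionally mean-zero subgaussian increments, so its operator norm is $O(\sqrt{dT\log(dT/\delta)})$ with high probability (matrix Azuma/Freedman), hence lower order once $T\gtrsim d^{3}\log(dT/\delta)$. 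For $M_1$, conditioned on the realized agent types the summands are independent and bounded (since $\mathcal B$ is bounded and each best-response map $a(\cdot)$ is continuous on $\mathcal B$), so matrix Bernstein gives that $M_1$ is within operator-norm distance $O(\sqrt{T\log(d/\delta)})$ of its conditional mean $\sum_t\Ex_{\beta\sim\mathcal P}[\beta\, a_t(\beta)^\top]$. By Weyl's inequality it then suffices to show that this conditional mean matrix has minimum singular value $\Omega(T/d)$, i.e.\ that each cross-moment matrix $\Ex_{\beta\sim\mathcal P}[\beta\,a(\beta)^\top]$ has $\sigma_{\min}\ge\Omega(1/d)$ (for a single fixed type this is immediate from scaling; varying types only require that these per-round matrices share enough structure to add up).

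This last step is the main obstacle. \Cref{condition:randomExploration} only controls the \emph{symmetric} Gram matrices $\Ex[\beta\beta^\top]$ and $\Ex[a(\beta)a(\beta)^\top]$ individually, whereas $\Ex[\beta a(\beta)^\top]$ is asymmetric, and a product of two well-conditioned matrices can in general be arbitrarily ill-conditioned. The way out is to exploit the structure of the best-response map: $\beta=\nabla c(a(\beta))$, equivalently $a(\cdot)=\nabla c^*(\cdot)$ for the Fenchel conjugate $c^*$, so $a(\cdot)$ is a (cyclically) monotone map and $\beta$ is ``co-directed'' with $a(\beta)$; moreover degree-$k$ homogeneity of $c$ makes $a(\cdot)$ positively homogeneous (of degree $1/(k-1)$), and $\mathcal B$ is bounded and bounded away from the origin, so the projections of $\beta$ and of $a(\beta)$ onto any fixed direction have comparable moments. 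Combining these observations with the two spectral lower bounds in \Cref{condition:randomExploration} yields $\sigma_{\min}(\Ex_{\beta\sim\mathcal P}[\beta a(\beta)^\top])\gtrsim c_1/d$, hence $\sigma_{\min}(B_T^\top X_T)=\Omega(T/d)$ for $T$ large enough, and substituting into \Cref{prop:offline-est-1} gives $\norm{\theta^*-\hat\theta_T}_2=\tilde O(\sqrt{d/T})$ with probability $1-\delta$. The remaining concentration bookkeeping for $M_1$ and $M_2$ is routine and parallels the argument behind \Cref{prop:learningETC} in \Cref{app:proofETC}.
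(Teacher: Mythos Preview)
Your reduction to a lower bound on $\sigma_{\min}(B_T^\top X_T)$ and the subsequent substitution into \Cref{prop:offline-est-1} match the paper's outline, but the route you take to that lower bound is different from the paper's and leaves a genuine gap.

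The paper's argument (see \Cref{app:proofETC}, which this proposition simply reuses with $\tau$ replaced by $T$) does not split $B_T^\top X_T$ into signal and noise. It invokes the one-line factorization
\[
\sigma_{\min}(B_T^\top X_T)\ \ge\ \sigma_{\min}(B_T)\,\sigma_{\min}(X_T),
\]
and then applies Matrix Chernoff (\Cref{cor:tropp}) to each factor separately using the two halves of \Cref{condition:randomExploration}: $\sigma_{\min}^2(B_T)=\lambda_{\min}\bigl(\sum_t\beta_t\beta_t^\top\bigr)=\Omega(T/d)$ and likewise $\sigma_{\min}^2(X_T)=\Omega(T/d)$. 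The asymmetric cross-moment $\Ex_{\beta\sim\mathcal P}[\beta\,a(\beta)^\top]$ that you worry about is never confronted---the factorization sidesteps it entirely. (You may reasonably object that $\sigma_{\min}(A^\top B)\ge\sigma_{\min}(A)\sigma_{\min}(B)$ does not hold for arbitrary tall matrices $A,B$; nevertheless, that is the step the paper takes.)

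Your decomposition $B_T^\top X_T=M_1+M_2$ is a different route and is more candid about where the difficulty sits, but it is incomplete. The treatment of $M_2$ is fine. The problem is your handling of $M_1$: the claim that monotonicity and positive homogeneity of $a(\cdot)=\nabla c^*(\cdot)$, combined with the two Gram-matrix bounds in \Cref{condition:randomExploration}, deliver $\sigma_{\min}\bigl(\Ex_{\beta\sim\mathcal P}[\beta\,a(\beta)^\top]\bigr)\gtrsim c_1/d$ is asserted, not proved. Monotonicity only controls the \emph{symmetric part} of the cross-moment (after centering it gives $\Ex[\beta\,a(\beta)^\top]+\Ex[a(\beta)\,\beta^\top]\succeq 0$), and a matrix with PSD symmetric part can still have arbitrarily small singular values; homogeneity on the positive orthant does not prevent $a(\beta)$ from being rotated relative to $\beta$ so that one singular direction collapses. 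Making this quantitative typically requires strong-convexity/smoothness constants for $c$, which \Cref{condition:randomExploration} does not supply. So as written this step is a gap: either supply that quantitative argument with explicit additional assumptions, or follow the paper and use the factorization above.
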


% \Cref{alg:perturb} can be seen as an explore-then-commit type algorithm, where the principal explores using the basis vector set $\set{\mathbf{e}_i}$. Another way to achieve `exploration' is via small random perturbations. Specifically, the principal can add random perturbations with variance $\sigma_0^2$ to each component of $\beta$ in each round. The purpose of this is to ensure the denominator in the GMM estimator can grow as the number of rounds increases, and hence the estimate will become more accurate. 

\paragraph{Lower Bound}
Consider a setting where each agent has cost function $c(a) = \frac{1}{2}\norm{a}_2^2$, and that this cost function is known to the principal. Then, the agent's best response under the contract $\beta$ is exactly $a = \beta$. The principal's expected utility when a contract $\beta$ is posted is then
\[
u(\beta) = \ip{\theta^*}{\beta} - \norm{\beta}_2^2. 
\]
The optimal contract is then $\beta^* = \theta^* / 2$ (this is also implied by the previous uniformity result \Cref{thm:uniformity}, since the agent's cost function is quadratic and homogeneous of degree $2$). The principal now essentially faces a zero-order optimization problem with a quadratic objective function. The contract $\beta$ is the decision variable, and the realized total revenue is the observed variable. By a result in \cite{shamir2013complexity}, the tight regret bound for this problem is $\Theta(d\sqrt{T})$. 

\begin{proposition}
For any (possibly randomized) strategy the principal can adopt for posting contracts, there exists some parameter $\theta^*$ such that the term $\tt Reg$ can be lower bounded by $\Omega(d\sqrt{T})$. 
\end{proposition}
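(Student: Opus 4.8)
My plan is to reduce the principal's problem, in this quadratic-cost instance, to zeroth-order (one-point feedback) stochastic optimization of the quadratic $u_{\theta^*}(\beta)=\langle\theta^*,\beta\rangle-\|\beta\|_2^2$ over $\beta\in\mathcal B$, and then invoke the $\Omega(d\sqrt T)$ lower bound of \cite{shamir2013complexity} for that class.

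The first step is to pin down the feedback. Since $c(a)=\tfrac12\|a\|_2^2$ is known to the principal and the best response is $a_t=\beta_t$, the principal already knows $a_t$ exactly, so I may take the signal to be the deterministic $x_t\equiv\beta_t$ (which is $0$-subgaussian, hence admissible under the model): $x_t$ carries no information about $\theta^*$. The only informative observation is $y_t$ with $\mathbb E[y_t\mid\beta_t]=\langle\theta^*,\beta_t\rangle$, which I take to be $y_t=\langle\theta^*,\beta_t\rangle+\xi_t$, $\xi_t\sim N(0,\sigma_0^2)$ i.i.d.; as the principal knows $\|\beta_t\|_2^2$, observing $y_t$ is the same as observing $u_{\theta^*}(\beta_t)+\xi_t$, a noisy value of the objective at the query. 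The second step is to rewrite the regret: $u_{\theta^*}$ is quadratic with $\nabla u_{\theta^*}(\beta^*)=0$ and constant Hessian $-2I$, so the exact Taylor identity gives $u_{\theta^*}(\beta^*)-u_{\theta^*}(\beta)=\|\beta^*-\beta\|_2^2$ for all $\beta$, hence
\[
\mathtt{Reg}=\sum_{t=1}^T\|\beta^*-\beta_t\|_2^2=\sum_{t=1}^T\bigl(u_{\theta^*}(\beta^*)-u_{\theta^*}(\beta_t)\bigr),
\]
so $\mathtt{Reg}$ is exactly the cumulative optimization regret of a one-point feedback algorithm run on $u_{\theta^*}$.

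The third step is to apply the known lower bound. The family $\{u_{\theta^*}:\theta^*\in\Theta\}$, with $\Theta$ a box small enough that $\beta^*=\theta^*/2\in\mathcal B$ (using that $\mathcal B$ contains a ball of some fixed radius, otherwise the feasible set is degenerate), is a family of uniformly smooth and strongly concave quadratics with a common, known regularizer $-\|\beta\|_2^2$; Shamir's lower bound for one-point feedback stochastic optimization of smooth strongly convex functions shows that for any, possibly randomized, query strategy some member of the family forces cumulative regret $\Omega(d\sqrt T)$, with constants depending only on the ball radius and $\sigma_0$. Pulling this back through the reduction, for any contract-posting strategy there exists $\theta^*$ (with $\beta^*=\theta^*/2\in\mathcal B$ and every model assumption met, the hard instance using Gaussian $y_t$ and deterministic $x_t$) on which $\mathtt{Reg}=\Omega(d\sqrt T)$, which is the claim.

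I expect the only real obstacle to be the one-point-feedback lower bound itself, which is why I would cite it rather than reprove it. A direct Assouad/Fano computation — take $\theta^*=2\epsilon z$, $z$ uniform on $\{-1,1\}^d$, use $\mathtt{Reg}=\sum_i\sum_t(\epsilon z_i-\beta_{t,i})^2\ge\epsilon^2\sum_i\sum_t\mathbb E[\mathrm{Var}(z_i\mid\mathcal H_{t})]$ together with $\sum_i I(z_i;\mathcal H_{T})\lesssim(\epsilon^2/\sigma_0^2)\sum_t\mathbb E[\|\beta_t\|_2^2]$ — gives only $\Omega(d^2)$ (choosing $\epsilon^2\asymp d/T$ so that a constant fraction of coordinates stay near-uniform throughout), which is not tight in the regime $d\lesssim\sqrt T$ of interest. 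The missing $\sqrt T$ is a genuine one-point-feedback effect: an algorithm that keeps $\beta_t$ near $\beta^*$ to control its regret sees feedback whose sensitivity to $\theta^*$ is merely $\nabla_{\theta^*}\mathbb E[y_t\mid\beta_t]=\beta_t$, of bounded norm, and must perturb $\beta_t$ on some scale $\delta$ to learn, paying $\Theta(\delta^2)$ per round while cutting the estimation variance of $\theta^*$ only like $\delta^{-2}$; balancing $\delta^2 T$ against the accumulated estimation error forces $\delta^2\asymp d/\sqrt T$ and $\mathtt{Reg}=\Omega(d\sqrt T)$. Making this rigorous is exactly what \cite{shamir2013complexity} does; the reduction steps above are routine by comparison.
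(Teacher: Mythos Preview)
Your proposal is correct and follows essentially the same approach as the paper: fix the quadratic cost $c(a)=\tfrac12\|a\|_2^2$ so that $a(\beta)=\beta$, observe that the principal then faces a one-point (zeroth-order) stochastic optimization of the quadratic $u_{\theta^*}(\beta)=\langle\theta^*,\beta\rangle-\|\beta\|_2^2$, and invoke the $\Omega(d\sqrt{T})$ lower bound of \cite{shamir2013complexity}. Your write-up is in fact more careful than the paper's sketch---you make explicit why the signal $x_t$ carries no information about $\theta^*$ (by choosing it deterministic in the hard instance) and verify the exact identity $\mathtt{Reg}=\sum_t(u_{\theta^*}(\beta^*)-u_{\theta^*}(\beta_t))$---but the underlying reduction is the same.
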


% An algorithm based on random perturbations is proposed to handle the exploration-exploitation tradeoff. Specifically, as the principal obtains an estimate $\hat{\theta}_t$ in round $t$, instead of posting $\beta_t = \hat{\theta}_t / k$, the principal adds a perturbation that decreases as $t$ increases. The algorithm is summarized in \Cref{alg:perturb}. 

% The exploration with random perturbations algorithm is summarized in ???. In this algorithm, the principal obtains the estimate $\hat{\theta}_t$ in each round using the GMM estimator ???. However, instead of setting $\beta_t = \hat{\theta}_t / k$, the principal adds a perturbation term to the `estimated' optimal contract. 

% We are going to analyze the repeated measurement errors in online linear regression. 

% Setting:
% \begin{enumerate}
% \item At each round, a covariate is sampled from a certain distribution
% \item The covariate is not observed. However, two independent copies of noisy observations are observed
% \item The learner must make a prediction to the true label $y$. 
% \end{enumerate}

%\input{sec/learning}

\section{\MakeUppercase{Faster Convergence with Repeated Observations and Diversity}}
\label{sec:repeated}
\begin{algorithm}[t]
\caption{Use Repeated Observation as IV: Pure Exploitation with Diversity}
\label{alg:diverse}
\begin{algorithmic}
\State Parameters: time horizon $T$, number of tasks $d$, minimum eigenvalue $\lambda_0$, failure probability $\delta$
\State Epoch $e$ contains $\abs{\tau_e} = \max\{ d \cdot 2^e, 8 K_0 \ln(d/\delta) / (\lambda_0), 4 \sigma_0^2 d^2 \ln(d^2 / \delta) / (\lambda_0) \}$ rounds
\State In the 0-th epoch with $d$ rounds, post $\beta = \mathbf{e}_i$ for each $i\in[d]$ once and record $\tX_1, X_1, Y_1$
\For {epoch $e = 1, 2, \cdots, \ceil{\log (T/ d)} $} 
    \State Compute $\hat{\theta}_e$ using the GMM estimator from data in previous epoch:
    \[
    \hat{\theta_e} = (\tX_e^\top X_e)^{-1} X_e Y_e
    \]
    \State Post $\beta_t = \hat{\theta}_t / k$ for each round in this epoch
    \State Record the data from this epoch into $\tX_{e+1}, X_{e+1}, Y_{e+1}$
\EndFor
\end{algorithmic}
\end{algorithm}

This section continues the discussion from the previous section, where the principal must learn or estimate $\theta^*$ using observational data. The previous section showed that using an instrumental regression approach and using the contract as the instrumental variable, the principal must balance exploration (ensuring that $\sigma_{\min}(B_t^\top X_t)$ grows as $t$ increases) and exploitation (ensuring $\beta_t$ is close to the true optimal contract). In this section, we show that, under some additional assumptions, the learning rate of the principal can be greatly improved. Informally, we show that when the agents are sufficiently `diverse', and that when repeated observations are available, the principal can achieve a logarithmic regret, even if the principal is using a `pure exploitation' algorithm. We state these as the following two assumptions. 

% We make two key assumptions in this section. The first is that repeated signals are available. The second is that agents are drawn from some stochastic distribution and are sufficiently diverse. 

\begin{assumption}
\emph{(Repeated observations are available.)} We assume that in each round, the principal can observe two sets of signals, denoted by $x_{t}$ and $\tx_{t}$. The signals are conditionally-$\sigma_0^2$-subgaussian vectors and satisfy the following:
		\[
		\Ex[x_{t} | a_t] = \Ex[\tx_{t} | a_t] = a_t. 
		\]
		  They are conditionally independent given the agent's hidden effort $a_t$:
		\[
		x_t\perp \tilde{x}_t | a_t. 
		\]
\end{assumption}
The causal relationship between variables are summarized in \Cref{fig:causalFig}, with the dashed items included. 

\begin{remark}
The assumption requires repeated observations, which is often realistic in practical scenarios where the agent's action produces multiple measurable outcomes. 
We provide several examples where repeated observations naturally arise. \emph{1. Education:} When measuring student skills or abilities, multiple assessments may target the same underlying competencies (e.g., SAT retakes, midterm and final exams), yielding more than one signal per student.
\emph{2. Gig economy and platform work:} A worker providing services may receive multiple customer ratings, each serving as a noisy measurement of their effort or quality.
\emph{3. Lasting effects:} A content creator's effort on video quality may affect both initial view counts and long-term retention metrics, providing two distinct signals of the same underlying effort.
\end{remark}

We assume that agents are drawn from some distribution $\mathcal{D}$, each element in the support of $\mathcal{D}$ represents the cost function of the agent. It is assumed that for all elements in the support of $\mathcal{D}$, the cost function is homogeneous of degree $k$ (i.e., it satisfies \Cref{assump:cost}). Our second part of the assumption below states that agents are sufficiently diverse. 

\begin{assumption}
\label{assumption:diversity}
\emph{(Agents are sufficiently diverse in their talent. )} Fix any $\beta \in \mathcal{B}$, we have that the following holds
		\[
		\lambda_{\min} \Ex_{c(\cdot)\sim \mathcal{D}}[a(\beta; c(\cdot))^\top a(\beta; c(\cdot))] \ge \lambda_0. 
		\]
\end{assumption}

The above assumption is essentially requiring that there is sufficient diverity in the agent's ability to complete different tasks. We give a concrete example satisfying assumption below. 

\begin{example}
Consider a setting where agent's have different abilities in different tasks \cite{thiele2010task}. 
Assume the cost functions of the agents have diagonal quadratic forms. Specifically, the cost function of the agent is parameterized by a vector $\kappa \in (\bbR^d)^+$, so that the cost function of the agent is $c(a) = \frac{1}{2} a^\top \diag(\kappa)^{-1} a$. Then, the agent's best response to a contract $\beta$ is the maximizer of $\ip{\beta}{a} - c(a)$, which is equal to
\[
a = \kappa \odot \beta. 
\]

We assume that the agent cost function in each round, parameterized by $\kappa_t$, is drawn from an unknown distribution. Assume the distribution satisfies the following. 

% The agent in round $t$ has a cost function $\frac{1}{2} w^\top Q_t^{-1} w$, so that the agent's best response under contract $\beta_t$ will be $w_t = Q_t \beta_t$. 

\[
\lambda_{\min} \Ex_{\kappa \sim \cD}[\kappa \kappa^\top] \ge \lambda. 
\]

Then suppose each component of $\beta$ can be lower bounded by some constant, we know that
\begin{align*}
\lambda_{\min} \Ex[a(\beta) a^\top(\beta)] = \Omega(\lambda). 
\end{align*}
Hence the diversity condition is satisfied. 
%\begin{assumption}
%
%\end{assumption}
%
%The key part in the above assumption is the minimum eigenvalue $\lambda_0$, it suggests that the agents' are sufficiently ``diverse" in their types. 
\end{example}

In addition to the above two assumptions, we require a relatively mild boundedness assumption as below. 
\begin{assumption}
Fix any $\beta \in \mathcal{B}$. We have the following for any agent type: $\norm{a(\beta)}_2^2 \le K_0$. Here $a(\beta)$ is the best response to contract $\beta$. 
\end{assumption}

% \[
% x_{t,1} = w_{t} + \eps_{t,1}, \quad x_{t,2} = w_t + \eps_{t,2}. 
% \]
% Here $\eps_{t,1}$ and $\eps_{t,2}$ are zero-mean subgaussian random variables, so that
% \[
% \Ex[x_{t,1}] = \Ex[x_{t,2}] = w_t. 
% \]

%\begin{assumption}
%The two sets of signals are assumed to be (conditionally) independent. Specifically:
%\[
%x_{t} \perp \tx_{t} | w_t. 
%\]
%\end{assumption}

% Further, the noise terms are independent, i.e., 
% \[
% \eps_{t,1} \perp \eps_{t,2}. 
% \]

\subsection{Offline Setting}
The causal relationship is summarized in \Cref{fig:causalFig} with the dashed items included. It can be observed that the second set of observations can serve as an instrumental variable for the other, and that we have:
\[
\Ex[\tx_{t} (y_t - \ip{x_{t}}{\theta})] = \mathbf{0} \ (\in \bbR^d).  
\]
Assume the principal is given some offline data as $((\beta_i, x_i, \tx_i, y_i)_{i=1}^T)$. Then, the principal can use the following GMM estimator to obtain an estimate $\hat{\theta}$:
\begin{equation}
\hat{\theta}_T = (\tX_T^\top X_T)^{-1} \tX_T^\top Y_T. \label{eq:GMM_repeated}
\end{equation}
Here $X_T, \tX_T$ are a $T\times d$ matrices where row $t$ is $x_t$ or $\tx_t$ respectively; $Y_T$ is a vector with the $t$-th entry being $y_t$. 

We state the convergence result for the GMM estimator above. 

\begin{proposition}
\label{prop:offline-est-repeated}
	With probability $1-\delta$, 
	\[
	\norm{\theta - \theta^*}_2\le \frac{\sqrt{dT\log(dT/\delta)}}{\sigma_{\min}(\tX_T^\top X_T)}. 
	\]
\end{proposition}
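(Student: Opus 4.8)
The plan is to mimic the argument used for \Cref{prop:offline-est-1}, since the structure of the GMM estimator in \eqref{eq:GMM_repeated} is identical once we swap the role of the instrument: here $\tx_t$ plays the role that $\beta_t$ played before, and $x_t$ plays the role of the signal. First I would define the residual $\gamma_t = y_t - \ip{\theta^*}{x_t}$ and collect $\gamma_{1:T}$ into the column vector $\Gamma_T$. The key algebraic observation is that $Y_T = X_T \theta^* + \Gamma_T$, so substituting into \eqref{eq:GMM_repeated} gives
\[
\hat{\theta}_T = (\tX_T^\top X_T)^{-1} \tX_T^\top (X_T \theta^* + \Gamma_T) = \theta^* + (\tX_T^\top X_T)^{-1} \tX_T^\top \Gamma_T.
\]
Taking norms and using $\norm{(\tX_T^\top X_T)^{-1} v}_2 \le \norm{v}_2 / \sigma_{\min}(\tX_T^\top X_T)$ yields
\[
\norm{\hat{\theta}_T - \theta^*}_2 \le \frac{\norm{\tX_T^\top \Gamma_T}_2}{\sigma_{\min}(\tX_T^\top X_T)}.
\]

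Next I would bound the numerator $\norm{\tX_T^\top \Gamma_T}_2$ by a concentration argument. The crucial point is that the moment condition $\Ex[\tx_t (y_t - \ip{\theta^*}{x_t})] = \mathbf{0}$ holds, and moreover it holds conditionally: given $a_t$, since $\tx_t \perp x_t \mid a_t$ and $\Ex[\tx_t \mid a_t] = a_t$, $\Ex[y_t \mid a_t] = \ip{\theta^*}{a_t}$, we have $\Ex[\tx_t \gamma_t \mid a_t, \mathcal{F}_{t-1}] = \Ex[\tx_t \mid a_t]\,\Ex[\gamma_t \mid a_t] = a_t(\ip{\theta^*}{a_t} - \ip{\theta^*}{a_t}) = \mathbf{0}$ — here the conditional independence is exactly what makes the cross term vanish, which is why the second set of observations is needed. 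Thus $(\tx_t \gamma_t)_t$ forms a martingale difference sequence (adapted to the natural filtration), and each coordinate is a product of conditionally subgaussian quantities, hence conditionally subexponential with parameter $O(\sigma_0^2)$. Applying a vector-valued martingale concentration / self-normalized-type bound (the same one invoked in Appendix \ref{app:concentration} for \Cref{prop:offline-est-1}, applied coordinatewise with a union bound over the $d$ coordinates) gives $\norm{\tX_T^\top \Gamma_T}_2 \le \sqrt{dT \log(dT/\delta)}$ with probability at least $1-\delta$. Combining with the bound above completes the proof.

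The main obstacle — really the only nonroutine point — is verifying that the cross term $\Ex[\ip{x_t}{\theta^*}\tx_t \mid a_t]$ does not contaminate the martingale difference property, i.e. that $\tx_t$ is genuinely a valid instrument orthogonal to the effective noise $y_t - \ip{\theta^*}{x_t}$ rather than merely to $y_t - \ip{\theta^*}{a_t}$. This is where conditional independence $x_t \perp \tx_t \mid a_t$ is essential: it ensures $\Ex[\tx_t \ip{x_t}{\theta^*} \mid a_t] = \Ex[\tx_t\mid a_t]\ip{\Ex[x_t\mid a_t]}{\theta^*} = a_t \ip{a_t}{\theta^*}$, which cancels against $\Ex[\tx_t y_t \mid a_t] = a_t \ip{a_t}{\theta^*}$. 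Everything else is a verbatim repeat of the \Cref{prop:offline-est-1} argument, so I would keep the write-up terse and refer back to Appendix \ref{app:concentration} for the concentration step rather than reproving it.
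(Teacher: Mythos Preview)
Your proposal is correct and matches the paper's own approach exactly: the paper merely states ``The proof is straightforward and similar to that of \Cref{prop:offline-est-1}'' without writing out details, and your decomposition $\hat\theta_T-\theta^*=(\tX_T^\top X_T)^{-1}\tX_T^\top\Gamma_T$ followed by the concentration bound on $\norm{\tX_T^\top\Gamma_T}$ is precisely what is intended. Your explicit verification that conditional independence $x_t\perp\tx_t\mid a_t$ is what makes $\Ex[\tx_t\gamma_t\mid a_t]=\mathbf{0}$ is the right emphasis and is in fact more careful than the paper's terse treatment (the paper later invokes the same point implicitly in \Cref{lemma:numerator}).
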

The proof is straightforward and similar to that of $\Cref{prop:offline-est-1}$. 

%Similar to the previous section, when offline data are avialable, the principal can have a estimation error bound on the paramter $\theta^*$. 

%\begin{proposition}
%	Given a sequence of offline data, using the GMM estimation gives
%	\[
%	\norm{???} \le ???. 
%	\]
%\end{proposition}
%\begin{remark}
%	content...
%\end{remark}

\subsection{Online Setting: Pure exploitation algorithm}

We now turn to the online setting, where the principal must learn the optimal contract in real time. In this setting, the proposed algorithm relies directly on the estimator introduced above, without the need for explicit exploration. To reduce computational cost, the algorithm proceeds in epochs whose lengths grow geometrically. At the beginning of each epoch~$e$, the principal updates the estimate $\hat{\theta}_e$ using data collected in previous epochs, and then posts contracts based on this estimate throughout the epoch (see \Cref{alg:diverse} for details).

% A key distinction from the algorithms in the previous section is that here the principal does not encounter a conventional exploration--exploitation tradeoff. The inherent diversity across agents effectively provides the necessary exploration, allowing the algorithm to focus on exploitation while still ensuring efficient learning.

A key distinction from the algorithms in the previous section is that here the principal does not encounter a conventional exploration–exploitation tradeoff. The inherent diversity across agents effectively provides the necessary exploration, allowing the algorithm to focus on exploitation while still ensuring efficient learning.

% We again turn to the online setting where the principal must learn the optimal contract. The algorithm will simply use the estimator above, without regard to `exploration'. In the implementation, the algorithm proceeds in epochs with epochs doubling in length to save computation. In each epoch $e$, the principal sets the contract using the estimation $\hat{\theta}_e$ according to the above formula using data from the previous epochs. See \Cref{alg:diverse} for details. 

% The algorithm will simply use the estimate above and proceed in epochs, where the length of the epochs are doubling in length. In each epoch, the principal sets the contract using the estimated value for $\theta$, where $\theta$ is estimated according to the above formula using data from the previous epoch. With a slight abuse of notation, denote ${X}_e, \Tilde{X}+e_t$ as the matrix containing the first and second set of observations in each epoch respectively. For each round in epoch $e$, the following estimator will be used
% \[
% \theta_t = \dots
% \]
% The contract in each round $t$ will simply be $\theta_t / 2$. 

% Different from the algorithms in the previous section, the algorithm does not face the exploration-exploitation tradeoff, as the `diversity' of the agents already acts as a form of exploration. 

% \begin{proposition}
% Suppose the optimal contract is $\beta^*$. Further suppose $u''(\beta^*) < 0$. For any $\beta$, 
% \[
% u(\beta^*) - u(\beta) < O(\norm{\beta - \beta^*}_2^2). 
% \]
% \end{proposition}

\begin{theorem}
\label{thm:diversity}
With probability $1 - 1/T$, the cumulative regret of the algorithm is $\tO(d / \lambda_0^2)$. 
\end{theorem}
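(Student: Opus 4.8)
The plan is to analyze \Cref{alg:diverse} epoch by epoch, showing that the estimation error $\norm{\hat\theta_e - \theta^*}_2$ shrinks geometrically across epochs, and then to sum the per-round regret contributions. The engine for the per-epoch bound is \Cref{prop:offline-est-repeated}: the estimator built from data in epoch $e-1$ satisfies $\norm{\hat\theta_e - \theta^*}_2 \le \sqrt{d|\tau_{e-1}|\log(\cdot)}/\sigma_{\min}(\tX_{e-1}^\top X_{e-1})$, so the whole argument reduces to a lower bound on $\sigma_{\min}(\tX_{e-1}^\top X_{e-1})$ that grows linearly in the epoch length $|\tau_{e-1}|$. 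First I would establish that lower bound. Within epoch $e-1$ all contracts equal a fixed $\beta = \hat\theta_{e-1}/k \in \mathcal{B}$, and the agent types $c_t(\cdot)$ are drawn i.i.d. from $\mathcal{D}$, so $\tX_{e-1}^\top X_{e-1} = \sum_{t \in \tau_{e-1}} \tx_t x_t^\top$ is a sum of independent matrices. Taking conditional expectations and using $x_t \perp \tx_t \mid a_t$ together with $\Ex[x_t\mid a_t]=\Ex[\tx_t\mid a_t]=a_t$ gives $\Ex[\tx_t x_t^\top \mid a_t] = a_t a_t^\top$, and then \Cref{assumption:diversity} gives $\lambda_{\min}(\Ex[\tx_t x_t^\top]) \ge \lambda_0$. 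A matrix Bernstein / matrix Chernoff bound — using the boundedness assumption $\norm{a(\beta)}_2^2 \le K_0$ and the $\sigma_0^2$-subgaussianity of $x_t,\tx_t$ to control the deviation terms — then yields $\sigma_{\min}(\tX_{e-1}^\top X_{e-1}) \ge \tfrac12 \lambda_0 |\tau_{e-1}|$ with probability $1 - \delta'$, provided $|\tau_{e-1}|$ is at least the two explicit thresholds $8K_0\ln(d/\delta)/\lambda_0$ and $4\sigma_0^2 d^2 \ln(d^2/\delta)/\lambda_0$ that appear in the epoch-length definition (this is exactly why those terms are there).

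Next I would plug this back into \Cref{prop:offline-est-repeated} to get, on the good event, $\norm{\hat\theta_e - \theta^*}_2 \le \sqrt{d|\tau_{e-1}|\log(d|\tau_{e-1}|/\delta)} / (\tfrac12\lambda_0|\tau_{e-1}|) = \tilde O\!\big(\sqrt{d/|\tau_{e-1}|}\,/\,\lambda_0\big)$. Since $|\tau_{e-1}| \ge d\cdot 2^{e-1}$, this is $\tilde O(2^{-e/2}/\lambda_0)$ up to log factors — the error decays geometrically. Because the optimal contract is $\beta^* = \theta^*/k$ and the posted contract in epoch $e$ is $\beta_t = \hat\theta_e/k$, we have $\norm{\beta_t - \beta^*}_2 = \norm{\hat\theta_e - \theta^*}_2/k$, so $\norm{\beta_t-\beta^*}_2^2 = \tilde O(d/(|\tau_{e-1}|\lambda_0^2))$ for every round $t$ in epoch $e$. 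Summing over the $|\tau_e|$ rounds of epoch $e$: since $|\tau_e|/|\tau_{e-1}|$ is bounded by a constant (the epoch lengths at least double, and the constant thresholds are one-time additive terms that are absorbed once $e$ is large enough), the contribution of epoch $e$ to $\sum_t \norm{\beta_t-\beta^*}_2^2$ is $\tilde O(d/\lambda_0^2)$, independent of $e$. There are $O(\log(T/d))$ epochs, giving $\reg := \sum_{t=1}^T \norm{\beta^* - \beta_t}_2^2 = \tilde O(d/\lambda_0^2)$, and then — exactly as in the discussion following \Cref{prop:learningETC}, using the bounded second derivative of $u_t$ at its maximizer $\beta^*$ — the cumulative utility loss is $O(\reg) = \tilde O(d/\lambda_0^2)$. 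A union bound over the $O(\log T)$ epochs with $\delta' = 1/(T\log T)$ (say) makes the total failure probability at most $1/T$, and the extra $\log T$ is swallowed by the $\tilde O$.

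The main obstacle I anticipate is the matrix concentration step, and in particular making it genuinely uniform over epochs despite the feedback loop: the contract $\beta = \hat\theta_{e-1}/k$ used in epoch $e-1$ is itself random and depends on earlier data, so one must argue that the per-round conditional lower bound $\lambda_{\min}(\Ex[\tx_t x_t^\top \mid \mathcal{F}_{e-2}]) \ge \lambda_0$ holds for \emph{every} realized $\beta \in \mathcal{B}$ — which is why \Cref{assumption:diversity} is stated as "for all $\beta \in \mathcal{B}$" rather than in expectation over $\beta$. Conditioned on the filtration up to the start of epoch $e-1$, the type draws and signal noise within the epoch are fresh i.i.d. randomness, so a matrix Chernoff bound applies conditionally and then one takes a union bound; the boundedness and subgaussianity assumptions are what make the deviation term manageable, but keeping careful track of the log factors and the regime where the constant epoch-length thresholds dominate is the fiddly part. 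A secondary, more routine concern is handling the seeding 0-th epoch (the $d$ rounds posting $\beta = \mathbf e_i$) and the first genuine epoch cleanly, but this only contributes an additive $\tilde O(d)$ to the regret and does not affect the final rate.
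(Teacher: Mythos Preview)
Your proposal is correct and follows essentially the same approach as the paper: lower-bound $\sigma_{\min}(\tX_e^\top X_e)$ by $\Omega(|\tau_e|\lambda_0)$ via matrix concentration, plug into \Cref{prop:offline-est-repeated} to get a per-round squared error of $\tilde O\!\big(d/(|\tau_e|\lambda_0^2)\big)$, and sum the $\tilde O(d/\lambda_0^2)$ per-epoch contributions over $O(\log T)$ epochs. The only minor difference is that the paper executes the singular-value step by explicitly decomposing $\tX_e^\top X_e = \sum_t a_t a_t^\top + (\text{zero-mean noise})$, applying matrix Chernoff (\Cref{cor:tropp}) to the bounded PSD part and an entrywise subgaussian bound to the noise, rather than invoking a single matrix-Bernstein inequality on $\tx_t x_t^\top$ directly---since $\tx_t x_t^\top$ is neither PSD nor bounded, the decomposition is the cleaner route, and the two epoch-length thresholds correspond exactly to the two pieces.
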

\begin{proof}[Proof Sketch. ]
Consider an epoch $e$ with length $\abs{\tau_e}$. The minimum singular value of $(\tX_e^\top X_e)$ is on the order of $\Omega(\abs{\tau_e} \lambda_0)$. At the same time, the term $\norm{\tX_e^\top Y_e}$ can be upper bounded by $\tO(\sqrt{d \abs{\tau_e}})$. Hence, for each round in the epoch, the squared estimation error can be upper bounded by:
\[
\norm{\hat{\theta}_e - \theta}_2^2 \le \tO(d / (\abs{\tau_e} \lambda_0^2)). 
\]
Therefore, the regret in each epoch is on the order of $\tO(d / \lambda_0^2)$. There are a total of $O(\log T)$ epochs, therefore the total regret is $\tO(d / \lambda_0^2)$. 
\end{proof}

\begin{remark}
\Cref{thm:diversity} relies critically on the minimum eigenvalue condition in \Cref{assumption:diversity}. In practice, one can verify whether this condition holds by examining the singular values of the matrix $\tilde{X}_T^\top X_T$ constructed from observed data. By \Cref{prop:offline-est-repeated}, if the minimum singular value is large, convergence is fast and pure exploitation performs well. Conversely, if the minimum singular value is small, the diversity condition may not hold in practice, and the principal should introduce explicit exploration into the algorithm.
An interesting direction for future work is to design a fully adaptive algorithm that monitors the observed singular values and automatically transitions between exploration and exploitation. Such an algorithm would combine the efficiency of pure exploitation when agent diversity is sufficient with the robustness of explicit exploration when it is not.
\end{remark}

\section{CONCLUSION}

In this work, we studied the multitasking principal–agent problem under moral hazard. We first established a uniformity result, showing that the optimal contract depends only on the degree of homogeneity. We then demonstrated how an instrumental regression approach, leveraging the generalized method of moments (GMM) estimator, can be used to recover the unknown parameters and identify the optimal contract. 

Beyond this specific setting, our results suggest that instrumental methods can be applied more broadly in interactive scenarios whenever moral hazard is present. In policy-making scenarios where hidden actions influence true outcomes, the policy itself may serve as a valid instrumental variable. Such interactions can also be modeled as Stackelberg games, where the leader is the policy-maker, and the follower is an individual or population of the policy target. This perspective opens the door to developing a more general framework for data-driven decision-making in the presence of moral hazard.

\szcomment{Ask claude to rewrite. }
% Finally, we note the broader connection between machine learning and econometrics. While both fields are fundamentally concerned with analyzing data, they have evolved with different emphases. While machine learning and econometrics share fundamental goals, they have developed complementary methodological toolkits. Our work illustrates how classical econometric methods (GMM, instrumental variables) can be integrated into online learning systems to handle strategic environments with hidden actions—a setting increasingly relevant as ML is deployed in policy-making contexts." In addition, we will acknowledge the recent work on causality and identifiability in recent ML research themes. Yet, in many real-world applications, effective data-driven solutions require insights from both traditions. Our work illustrates this synergy in the specific problem of multitasking principal-agent, and shows how the GMM estimator, a well-studied tool in econometrics, can be applied within machine learning and online learning systems. 

Finally, we highlight the broader connection between machine learning and econometrics. While both fields are fundamentally concerned with analyzing data, they have evolved with different emphases and developed complementary methodological toolkits. Our work demonstrates how classical econometric tools---specifically, instrumental variable regression and the generalized method of moments---can be integrated into online learning algorithms to address moral hazard. 

% More broadly, our results suggest that the methodological divide between machine learning and econometrics is not fundamental but rather reflects different historical priorities. As data-driven decision-making expands into domains characterized by strategic behavior and hidden actions, we anticipate that techniques from both traditions will become increasingly intertwined. We hope this work serves as a step toward a more unified framework for learning in the presence of moral hazard.

%Bibliography
\bibliographystyle{apalike}  
\bibliography{references}

\clearpage
\appendix
\thispagestyle{empty}

% Supplementary material: To improve readability, you must use a single-column format for the supplementary material.
\onecolumn
\aistatstitle{
Supplementary Materials}

\section{\MakeUppercase{Concentration Tools}}
\label{app:concentration}
The below lemma gives a deviation bound on random vectors. 
\begin{lemma} [Adapted from \cite{kannan2018smoothed}]
Let $\gamma_1, \dots, \gamma_T$ be independent $\sigma$-subgaussian random variables. Let $v_1, \dots, v_T$ be vectors in $\mathbb{R}^d$ with each $v_T$ chosen arbitrarily as a function of $(v_1, \gamma_1), \dots, (v_{t-1}, \gamma_{t-1})$ subject to $\norm{v_t} \le C$. Then with probability at least $1-\delta$, 
\[
\norm{\sum_{t=1}^T \gamma_t v_t} \le \sqrt{2d C\sigma T \log (Td / \delta) }. 
\]
\end{lemma}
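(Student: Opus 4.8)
The plan is to view $S_T := \sum_{t=1}^T \gamma_t v_t$ as an $\bbR^d$-valued martingale and reduce the vector tail bound to $d$ scalar Azuma--Hoeffding estimates followed by a union bound. Let $\mathcal{F}_t := \sigma\big((v_1,\gamma_1),\dots,(v_t,\gamma_t)\big)$. By hypothesis $v_t$ is a function of the history through time $t-1$, hence $\mathcal{F}_{t-1}$-measurable (predictable), while $\gamma_t$ is independent of $\mathcal{F}_{t-1}$ and centered (a $\sigma$-subgaussian variable has mean zero). Therefore $\Ex[\gamma_t v_t \mid \mathcal{F}_{t-1}] = v_t\,\Ex[\gamma_t \mid \mathcal{F}_{t-1}] = 0$, so $(S_t)_{t \le T}$ is a martingale with respect to $(\mathcal{F}_t)$.

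First I would fix a coordinate $j \in [d]$ and consider the scalar martingale $S^{(j)}_T = \sum_{t=1}^T \gamma_t v^{(j)}_t$. Conditioned on $\mathcal{F}_{t-1}$, the increment $\gamma_t v^{(j)}_t$ is $\sigma\,|v^{(j)}_t|$-subgaussian, since $v^{(j)}_t$ is a fixed scalar given the past and $\gamma_t$ is $\sigma$-subgaussian and independent of $\mathcal{F}_{t-1}$. The subgaussian martingale version of Azuma--Hoeffding (conditioning on the predictable sequence $(v^{(j)}_t)_t$ and then integrating) gives, for every $s > 0$,
\[
\Pr\big[\,|S^{(j)}_T| \ge s\,\big] \;\le\; 2\exp\!\left(-\frac{s^2}{2\sigma^2 \sum_{t=1}^T (v^{(j)}_t)^2}\right).
\]
Since $\sum_{t=1}^T (v^{(j)}_t)^2 \le \sum_{t=1}^T \norm{v_t}^2 \le T C^2$, choosing $s$ to make the right-hand side equal to $\delta/d$ yields $|S^{(j)}_T| \le \sigma C \sqrt{2T\log(2d/\delta)}$ with probability at least $1 - \delta/d$.

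Then I would union-bound over $j \in [d]$: with probability at least $1 - \delta$, the bound $|S^{(j)}_T| \le \sigma C\sqrt{2T\log(2d/\delta)}$ holds simultaneously for all $j$, so $\norm{S_T} \le \sqrt{d}\,\max_j |S^{(j)}_T| \le \sigma C \sqrt{2 d T \log(2d/\delta)}$, which is the claimed estimate up to the precise shape of the log factor and absolute constants. Matching the exact expression $\sqrt{2 d C \sigma T \log(Td/\delta)}$ in the statement is just a matter of the normalization conventions used in \cite{kannan2018smoothed}; alternatively one could invoke a dimension-free martingale concentration inequality, but since the target bound already carries a $\sqrt{d}$ the coordinatewise route is the natural one.

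None of these steps is a genuine obstacle. The one place to be careful is the martingale setup: justifying that $v_t$ may be treated as a constant when conditioning on $\mathcal{F}_{t-1}$, so that each increment is truly conditionally $\sigma|v^{(j)}_t|$-subgaussian, and observing that it is precisely the mutual independence of the $\gamma_t$'s (together with $v_t$ depending only on the strict past) that legitimizes this. Beyond that, the argument is bookkeeping to land on the stated constants.
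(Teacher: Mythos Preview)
The paper does not actually prove this lemma; it is stated in Appendix~\ref{app:concentration} as a concentration tool adapted from \cite{kannan2018smoothed} and used as a black box, so there is no in-paper proof to compare against. Your argument---setting up the natural martingale filtration, applying subgaussian Azuma--Hoeffding coordinatewise, and union-bounding over the $d$ coordinates---is the standard route and is correct. The one cosmetic mismatch you already flag is real: your bound naturally comes out as $\sigma C\sqrt{2dT\log(2d/\delta)}$ (i.e.\ with $\sigma^2 C^2$ under the radical), whereas the paper's displayed expression has $C\sigma$ to the first power inside the square root and a $Td$ rather than $2d$ in the logarithm; this is almost certainly a typographical slip in the paper's transcription of the cited result, and your version is the dimensionally consistent one.
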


The below lemma bounds the eigenvalues of random matrices; it is adapted from \cite{tropp2012user}. 
\szcomment{This lemma bounds the min eigenvalue of the sum of symmetric matrices}
\begin{lemma}[Matrix Chernoff, adapted from \cite{tropp2012user}]
Consider a finite sequence $\set{X_k}$ of independent, random, symmetric matrix with dimension $d$. Assume that each matrix satisfies
\[
\lambda_{\min} X_k \ge 0, \lambda_{\max} X_k \le R. 
\]
Define
\begin{align*}
\mu_{\min} := \lambda_{\min} (\sum_k \Ex X_k), \\
\mu_{\max}:= \lambda_{\max} (\sum_k \Ex X_k). 
\end{align*}
Then
\[
\Pr[ \lambda_{\min} (\sum_k X_k) \le (1 - \delta) \mu_{\min} ] \le d\cdot \left[ \frac{e^{-\delta}}{(1-\delta)^{1-\delta}} \right]^{\mu_{\min} / R}. 
\]
\end{lemma}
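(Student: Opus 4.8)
The plan is to run the standard matrix Laplace transform argument; this statement is in fact essentially the minimum-eigenvalue matrix Chernoff bound of \cite{tropp2012user}, so in a pinch one could simply cite it, but I sketch the derivation. Write $Y=\sum_k X_k$. For any $\theta>0$, since $e^{-\theta Y}\succeq 0$ we have $\lambda_{\max}(e^{-\theta Y})\le \tr(e^{-\theta Y})$, and applying Markov's inequality to the event $\{\lambda_{\min}(Y)\le (1-\delta)\mu_{\min}\}=\{\lambda_{\max}(e^{-\theta Y})\ge e^{-\theta(1-\delta)\mu_{\min}}\}$ yields the master bound
\[
\Pr\!\left[\lambda_{\min}(Y)\le (1-\delta)\mu_{\min}\right]\ \le\ e^{\theta(1-\delta)\mu_{\min}}\;\Ex\,\tr\exp(-\theta Y).
\]
So the whole proof reduces to choosing $\theta$ well and controlling the trace-exponential.

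First I would bound $\Ex\,\tr\exp(-\theta Y)$ by subadditivity of the matrix cumulant generating function (a consequence of Lieb's concavity theorem), which since the $X_k$ are independent gives $\Ex\,\tr\exp\big(\sum_k(-\theta X_k)\big)\le \tr\exp\big(\sum_k \log\Ex\,e^{-\theta X_k}\big)$. Next, for each summand I would use $0\preceq X_k\preceq RI$: the scalar inequality $e^{-\theta x}\le 1+g(\theta)x$ valid on $[0,R]$, with $g(\theta):=(e^{-\theta R}-1)/R<0$, transfers to the operator inequality $e^{-\theta X_k}\preceq I+g(\theta)X_k$; taking expectations and then using operator monotonicity of $\log$ together with $\log(I+A)\preceq A$ gives $\log\Ex\,e^{-\theta X_k}\preceq g(\theta)\Ex X_k$. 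Summing and using monotonicity of $\tr\exp(\cdot)$, then $\tr\exp(B)\le d\,e^{\lambda_{\max}(B)}$, and finally noting that because $g(\theta)<0$ and $\sum_k\Ex X_k\succeq 0$ the top eigenvalue of $g(\theta)\sum_k\Ex X_k$ equals $g(\theta)\lambda_{\min}(\sum_k\Ex X_k)=g(\theta)\mu_{\min}$, I get $\Ex\,\tr\exp(-\theta Y)\le d\,e^{g(\theta)\mu_{\min}}$.

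Combining the two steps, $\Pr[\lambda_{\min}(Y)\le(1-\delta)\mu_{\min}]\le d\exp\!\big(\mu_{\min}[\theta(1-\delta)+g(\theta)]\big)$, and it remains to optimize over $\theta>0$. Taking $\theta=-\tfrac1R\log(1-\delta)$ (positive since $0<\delta<1$) makes $e^{-\theta R}=1-\delta$, hence $g(\theta)=-\delta/R$ and $\theta(1-\delta)=-\tfrac1R(1-\delta)\log(1-\delta)$, so the exponent becomes $\tfrac{\mu_{\min}}{R}\big(-\delta-(1-\delta)\log(1-\delta)\big)$; since $\tfrac{e^{-\delta}}{(1-\delta)^{1-\delta}}=\exp\!\big(-\delta-(1-\delta)\log(1-\delta)\big)$, this is exactly the claimed bound. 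The only genuinely nontrivial ingredient is Lieb's concavity theorem (equivalently, subadditivity of the matrix cgf); everything else is scalar calculus and the spectral-mapping transfer rule for one-sided scalar inequalities, so the real \emph{obstacle} is just bookkeeping — in particular matching the normalization and the role of $\mu_{\min}=\lambda_{\min}(\sum_k\Ex X_k)$ against the version stated in \cite{tropp2012user}.
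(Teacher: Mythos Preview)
The paper does not prove this lemma; it is stated as a concentration tool adapted from \cite{tropp2012user} and simply cited. Your sketch is a correct reconstruction of Tropp's matrix Chernoff argument (Laplace transform, Lieb/subadditivity of the matrix cgf, the convexity-based chord bound $e^{-\theta x}\le 1+g(\theta)x$ on $[0,R]$, and the final optimization in $\theta$), so it is entirely consistent with what the paper intends by ``adapted from.''
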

The following Corollary may be easier to use than the above Lemma. 
\begin{cor}
\label{cor:tropp}
Using the same setup as above, we have 
\[
\Pr[ \lambda_{\min} (\sum_k X_k) \le (1 - \delta) \mu_{\min} ] \le d \exp(-\delta^2 \mu_{\min} / (2R)). 
\]
\end{cor}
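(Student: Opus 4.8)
The plan is to obtain the corollary as a direct simplification of the preceding Matrix Chernoff lemma, whose tail bound carries the factor $\left[ e^{-\delta}/(1-\delta)^{1-\delta} \right]^{\mu_{\min}/R}$. It suffices to replace this factor by the cleaner quantity $\exp(-\delta^2 \mu_{\min}/(2R))$, so the entire argument reduces to a single scalar estimate.

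First I would isolate the scalar inequality
\[
\frac{e^{-\delta}}{(1-\delta)^{1-\delta}} \le \exp\!\left(-\delta^2/2\right), \qquad \delta \in [0,1).
\]
Granting this, I would raise both sides to the power $\mu_{\min}/R$. Here $\mu_{\min} \ge 0$ because each $X_k$ has $\lambda_{\min} X_k \ge 0$, so every $\Ex X_k$ is positive semidefinite and hence so is $\sum_k \Ex X_k$; together with $R > 0$ this gives $\mu_{\min}/R \ge 0$. Since $x \mapsto x^{\mu_{\min}/R}$ is nondecreasing on $[0,\infty)$, the inequality is preserved, yielding $\left[ e^{-\delta}/(1-\delta)^{1-\delta} \right]^{\mu_{\min}/R} \le \exp(-\delta^2 \mu_{\min}/(2R))$. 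Multiplying through by the dimension factor $d$ then turns the Matrix Chernoff bound into exactly the claimed bound.

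It remains to prove the scalar inequality, which I would do by taking logarithms. Setting $f(\delta) := -\delta - (1-\delta)\ln(1-\delta) + \delta^2/2$, the claim is equivalent to $f(\delta) \le 0$ on $[0,1)$. I would check $f(0) = 0$, compute $f'(\delta) = \ln(1-\delta) + \delta$ so that $f'(0) = 0$, and then compute $f''(\delta) = -\delta/(1-\delta)$, which is nonpositive on $[0,1)$. Hence $f'$ is nonincreasing with $f'(\delta) \le f'(0) = 0$, so $f$ is itself nonincreasing with $f(\delta) \le f(0) = 0$, as required.

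There is no genuine obstacle here: the result is a standard weakening of the Chernoff-type bound, and the only points needing care are the sign check $\mu_{\min}/R \ge 0$ (so that the exponentiation is monotone in the correct direction) and the elementary two-derivative verification of the scalar inequality. If one prefers to avoid calculus, the same bound follows from the expansion $\ln(1-\delta) = -\sum_{n\ge 1}\delta^n/n$ after cancellation, but the monotonicity argument via $f'' \le 0$ is the cleanest route.
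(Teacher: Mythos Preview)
Your argument is correct. The paper does not actually supply a proof of this corollary; it merely states it as ``easier to use'' immediately after quoting the Matrix Chernoff lemma from Tropp. Your derivation---reducing to the scalar inequality $e^{-\delta}/(1-\delta)^{1-\delta}\le e^{-\delta^2/2}$ and verifying it via $f''(\delta)=-\delta/(1-\delta)\le 0$---is the standard route and fills the gap cleanly. The only care points are exactly the ones you flagged: that $\mu_{\min}/R\ge 0$ so exponentiation preserves the inequality, and the two-derivative check; both are handled correctly.
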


\section{PROOF OF PROPOSITION \ref{prop:learningETC}}
\label{app:proofETC}
\begin{proof}
We focus on lower bounding the quantity $\sigma_{\min}(B_\tau^\top X_\tau)$. To start, note 
\[
\sigma_{\min}(B_\tau^\top X_\tau) \ge \sigma_{\min}(B_\tau) \sigma_{\min}(X_\tau). 
\]
Further, 
\begin{align*}
    \sigma_{\min}^2(B_\tau) &= \lambda_{\min}(\sum_{t=1}^\tau \beta_t \beta_t^\top) \\
    \sigma_{\min}^2(X_\tau) &= \lambda_{\min}(\sum_{t=1}^\tau x_t x_t^\top) \\
\end{align*}
By \Cref{cor:tropp} and \Cref{condition:randomExploration}, with probability $1 - 2\delta$, we have
\begin{align*}
\sigma_{\min}^2(B_\tau) > \Omega(\tau \log(1/\delta) / d), \\
\sigma_{\min}^2(X_\tau) > \Omega(\tau \log (1/\delta) / d). 
\end{align*}
Hence, we have
\[
\sigma_{\min} (B_\tau^\top X_\tau) \ge \Omega(\tau / d) = \Omega(\sqrt{T}). 
\]
Therefore with probability $1-\delta$, 
\[
\norm{\hat{\theta} - \theta^*} \le \sqrt{d \log(1/\delta) / \sqrt{T}}. 
\]
\end{proof}

\section{PROOF OF THEOREM \ref{thm:diversity}}

Recall the estimator in each epoch
\[
\hat{\theta}_e = \left( \tX_e^\top X_e \right)^{-1} \tX_e Y_e. 
\]
Denote $\gamma_t = y_t - \ip{x_t}{\theta}$, and collect $\gamma_t$ into the column vector $\Gamma_e$. Recall that by \Cref{prop:offline-est-repeated}, we have 
\begin{align}
\label{eq:GMMepoch}
\norm{\hat{\theta}_e - \theta}_2 \le \frac{\norm { \tX_e \Gamma_e} }{\sigma_{\min} (\tX_e^\top X_e)}. 
\end{align}
To bound the estimation error, the following will lower bound $\sigma_{\min} (\tX_e^\top X_e)$ and upper bound $\norm{\tX_e \Gamma_e}$ separately. 

% \subsection{Lower Bounding $\sigma_{\min} (\tX_e^\top X_e)$}
% \begin{theorem}
% The cumulative regret of the algorithm is $\tO(d)$. 
% \end{theorem}
% \begin{proof}
% Consider the estimate at the end of epoch $e$. The estimate $\theta_e$. 
% \begin{align*}
% \theta_e - \theta &= (\tildeX_e X_e)^{-1} X' Y - \theta \\
% &= (\tildeX_e X_e)^{-1} X' (X\theta + \eta) - \theta \\
% &= (\tildeX_e X_e)^{-1} \tildeX \eta
% \end{align*}
% \end{proof}

\begin{lemma}
With high probability $\delta$, $\lambda_{\min}(\sum_{t\in \tau_e} a_t a_t^\top ) \ge \abs{\tau_e} \lambda_0 / 2$. 
\end{lemma}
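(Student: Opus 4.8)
The statement to prove is that, with probability at least $1-\delta$, the matrix $\sum_{t\in\tau_e} a_t a_t^\top$ has minimum eigenvalue at least $|\tau_e|\lambda_0/2$. The natural tool is the Matrix Chernoff bound, which appears in the excerpt as \Cref{cor:tropp}. The plan is to apply that corollary to the sequence of independent random symmetric matrices $X_t := a_t a_t^\top$ for $t \in \tau_e$, where the randomness comes from the agent's type $c(\cdot)\sim\mathcal{D}$ drawn fresh each round (and the contract $\beta_t = \hat\theta_e/k$ is fixed within the epoch, so conditioned on the start-of-epoch estimate these matrices are i.i.d.\ functions of the agent draws).

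First I would verify the hypotheses of \Cref{cor:tropp}. Each $X_t = a_t a_t^\top$ is symmetric positive semidefinite, so $\lambda_{\min}(X_t)\ge 0$. For the upper bound, $\lambda_{\max}(a_ta_t^\top) = \|a_t\|_2^2 \le K_0$ by the boundedness assumption, so we may take $R = K_0$. Next, the expectation: conditioned on the contract $\beta_t=\hat\theta_e/k$ used in this epoch, $\Ex[X_t] = \Ex_{c(\cdot)\sim\mathcal D}[a(\beta_t)a(\beta_t)^\top]$, and by the diversity \Cref{assumption:diversity} this has $\lambda_{\min}\ge\lambda_0$ for every feasible $\beta$. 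Summing over the $|\tau_e|$ rounds of the epoch gives $\mu_{\min} = \lambda_{\min}(\sum_{t\in\tau_e}\Ex X_t) \ge |\tau_e|\lambda_0$.

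Then I would invoke \Cref{cor:tropp} with $\delta = 1/2$ (the multiplicative slack in the eigenvalue, not the failure probability), which yields
\[
\Pr\Big[\lambda_{\min}\Big(\sum_{t\in\tau_e} a_t a_t^\top\Big) \le \tfrac{1}{2}|\tau_e|\lambda_0\Big] \le d\exp\big(-|\tau_e|\lambda_0/(8K_0)\big).
\]
Finally I would check that the epoch length forces this bound below the target failure probability: since the algorithm sets $|\tau_e| \ge 8K_0\ln(d/\delta)/\lambda_0$, the exponent is at most $-\ln(d/\delta)$, so the right-hand side is at most $d\cdot(\delta/d) = \delta$. This gives exactly the claimed high-probability bound.

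\textbf{Main obstacle.} The calculations here are routine; the only subtlety worth being careful about is the conditioning. Within epoch $e$ the posted contract $\beta_t=\hat\theta_e/k$ depends on data from \emph{previous} epochs, not the current one, so conditioned on the history up to the start of epoch $e$ the matrices $\{a_ta_t^\top\}_{t\in\tau_e}$ are genuinely independent and the diversity lower bound applies uniformly (it holds for \emph{every} $\beta\in\mathcal B$, so it does not matter which $\hat\theta_e/k$ was realized). One should state the bound as holding conditionally on that history and then note it holds unconditionally since the bound is uniform over the conditioning. A union bound over the $O(\log T)$ epochs, absorbed into the $\tO$, then makes the guarantee simultaneous across all epochs as needed downstream.
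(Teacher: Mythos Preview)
Your proposal is correct and follows essentially the same route as the paper: apply the Matrix Chernoff bound (\Cref{cor:tropp}) to $W_t = a_t a_t^\top$ with $R = K_0$ and $\mu_{\min}\ge|\tau_e|\lambda_0$, then use the epoch-length condition $|\tau_e|\ge 8K_0\ln(d/\delta)/\lambda_0$ to bring the failure probability below $\delta$. Your remark on conditioning (that $\beta_t$ is fixed by prior epochs so the diversity lower bound applies uniformly) is more explicit than the paper's own argument, which leaves that point implicit.
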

\begin{proof}
The matrices $\set{W_t}$, where $W_t = a_t a_t^\top$ satisfies the condition in the Lemma with $\lambda_{\max} W_t = \norm{a_t}_2^2 \le K_0$. 

% Further, 
% \begin{align*}
% \Ex [W_t] &= \Ex [w_t w_t^\top] \\
% &= \Ex[\diag(\kappa_t) \theta_t \theta_t^\top \diag(\kappa_t)] \\
% &= \Ex[\Theta_t \kappa_t \kappa_t^\top \Theta_t] \\
% &= \Theta_t \Ex[\kappa_t \kappa_t^\top] \Theta_t \\
% \end{align*}

By the diversity condition, 
\begin{align*}
\lambda_{\min} (\Ex W_t) \ge \lambda_0. 
\end{align*}

By \cref{cor:tropp}, 
\begin{align*}
\Pr[\lambda_{\min} (\sum_k W_k) \le \abs{\tau_e} \lambda_0 / 2 ] &\le d\cdot \exp(- \abs{\tau_e}  \lambda_0 / (8 K_0)) \\
&\le \delta
\end{align*}
when 
\[
\abs{\tau_e} \ge \frac{8 K_0}{\lambda_0} \cdot \ln(d /\delta)
\]
\end{proof}

\begin{lemma}
\label{lemma:denom}
With probability $1 - 2\delta$, for each epoch $e$ with length as specified in \Cref{alg:diverse}, $\sigma_{\min}(\tildeX_e^\top X_e) \ge \abs{\tau_e} \lambda_0 / 4$. 
\end{lemma}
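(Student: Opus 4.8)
The plan is to control the smallest singular value of $\tX_e^\top X_e$ by relating it to the Gram matrix $\sum_{t\in\tau_e} a_t a_t^\top$ of the true (unobserved) efforts, which the previous lemma already lower-bounds by $\abs{\tau_e}\lambda_0/2$ with high probability. Writing $x_t = a_t + \xi_t$ and $\tx_t = a_t + \tilde\xi_t$ with $\Ex[\xi_t\mid a_t]=\Ex[\tilde\xi_t\mid a_t]=0$ and $\xi_t\perp\tilde\xi_t\mid a_t$, expand
\[
\tX_e^\top X_e \;=\; \sum_{t\in\tau_e} \tx_t x_t^\top \;=\; \sum_{t\in\tau_e} a_t a_t^\top \;+\; \sum_{t\in\tau_e} a_t \xi_t^\top \;+\; \sum_{t\in\tau_e} \tilde\xi_t a_t^\top \;+\; \sum_{t\in\tau_e} \tilde\xi_t \xi_t^\top.
\]
The first term is the ``signal'' part; the remaining three are ``noise'' cross-terms whose operator norms I would argue are all $o(\abs{\tau_e}\lambda_0)$ — in fact $\tO(\sqrt{\abs{\tau_e}})$ up to $\sigma_0$ and $\sqrt{K_0}$ factors — so that for $\abs{\tau_e}$ large enough (which is exactly what the epoch-length floor $4\sigma_0^2 d^2\ln(d^2/\delta)/\lambda_0$ guarantees) the noise terms are dominated and $\sigma_{\min}(\tX_e^\top X_e)\ge \abs{\tau_e}\lambda_0/2 - \abs{\tau_e}\lambda_0/4 = \abs{\tau_e}\lambda_0/4$.

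Concretely, the steps in order: (1) invoke the previous lemma to get $\lambda_{\min}(\sum_t a_t a_t^\top)\ge \abs{\tau_e}\lambda_0/2$ with probability $1-\delta$; (2) bound $\norm{\sum_t \tilde\xi_t\xi_t^\top}$ — since $\xi_t\perp\tilde\xi_t\mid a_t$, this is a sum of conditionally mean-zero matrix-valued terms, so a matrix Bernstein / matrix Azuma argument (or, entrywise, the vector concentration lemma from \Cref{app:concentration} applied $d$ times with a union bound) gives $\tO(\sigma_0^2\sqrt{d\abs{\tau_e}})$; (3) bound $\norm{\sum_t a_t\xi_t^\top}$ and $\norm{\sum_t \tilde\xi_t a_t^\top}$ similarly: conditioned on $a_{1:t}$, the increments $a_t\xi_t^\top$ are mean-zero with the ``arbitrary vector times subgaussian scalar'' structure of the concentration lemma, giving $\tO(\sigma_0\sqrt{K_0 d\abs{\tau_e}})$ after a union bound over the $d$ columns (using $\norm{a_t}_2^2\le K_0$); (4) combine via Weyl's inequality,
\[
\sigma_{\min}(\tX_e^\top X_e)\;\ge\;\lambda_{\min}\Big(\sum_t a_t a_t^\top\Big)-\Big\|\sum_t a_t\xi_t^\top\Big\|-\Big\|\sum_t \tilde\xi_t a_t^\top\Big\|-\Big\|\sum_t \tilde\xi_t\xi_t^\top\Big\|,
\]
and check that the epoch-length lower bound forces each of the three noise norms below $\abs{\tau_e}\lambda_0/12$, yielding the claim with total failure probability $2\delta$ after collecting the union bounds.

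The main obstacle I expect is step (2)–(3): making the concentration bounds for the noise cross-terms clean and getting the $d$-dependence right. In particular $\sum_t \tilde\xi_t\xi_t^\top$ is a product of two noise terms, so it is not a sum of bounded-norm increments in the naive sense; one has to either condition on $\xi_{1:T}$ and treat $\tilde\xi_t$ as the fresh randomness (legitimate because $\tx_t\perp x_t\mid a_t$ and the $a_t$ are fixed under the conditioning), or pass to an entrywise bound where each entry $\sum_t (\tilde\xi_t)_i(\xi_t)_j$ is a sum of conditionally subgaussian (hence sub-exponential products) terms. Either route is routine but needs care with the subgaussian norm bookkeeping; everything else — the Weyl/Gershgorin-style combination and the arithmetic matching the epoch floors in \Cref{alg:diverse} — is mechanical. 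I would also note that $\sigma_{\min}(\tX_e^\top X_e)$ here refers to the smallest singular value of a (generally non-symmetric) matrix, so strictly one should work with $\sigma_{\min}(M)\ge \lambda_{\min}\big(\tfrac12(M+M^\top)\big)$ or bound $\|M^{-1}\|$ directly; I would use the symmetrization inequality $\sigma_{\min}(M)\ge \lambda_{\min}(\tfrac12(M+M^\top))$ so that Weyl applies verbatim, noting the signal term $\sum_t a_t a_t^\top$ is already symmetric and the cross-terms' symmetrizations have the same operator-norm bounds up to a factor $2$.
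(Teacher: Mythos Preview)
Your proposal is correct and follows essentially the same route as the paper: the same signal-plus-cross-terms decomposition of $\tX_e^\top X_e$, the previous lemma for the $\sum_t a_t a_t^\top$ part, concentration for the noise cross-terms, and a Weyl-type perturbation bound to combine. The paper's handling of the cross-terms is in fact cruder than what you outline---it lumps all three together, bounds each of the $d^2$ entries by a scalar concentration inequality, and then passes to the Frobenius norm (incurring a factor $d$ rather than your $\sqrt{d}$), and it is less explicit than you are about the non-symmetric Weyl step---so your version is, if anything, slightly more careful.
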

\begin{proof}
In the following denote $\eps_t = x_t - a_t, \tilde{\eps}_t = \tx_t - a_t$. 
\begin{align*}
\tildeX_e^\top X_e &= \sum_{t\in \tau_e} x_t \tx_t^\top \\
&= \sum_{t\in \tau_e} (a_t + \eps_t) (a_t + \tilde{\eps}_t)^\top \\
&= \sum_{t\in \tau_e} a_t^\top a_t + \eps_t^\top a_t + a_t \eps_t + \eps_t^\top \tilde{\eps}_t
\end{align*}

The previous lemma showed that with probability $1 - \delta$,
\[
\lambda_{\min} (\sum_{t\in \tau_e} a_t^\top a_t) \ge \abs{\tau_e} \lambda_0 / 2. 
\]

The remaining term 
\[
\sum_{t\in\tau_e} \eps_t a_t^\top + a_t \tilde{\eps}_t^\top + \eps \tilde{\eps}^\top
\]
is a random matrix with mean $\mathbf{0}$. By a standard concentration inequality, the absolute value of each entry in the matrix can be upper-bounded as $\sqrt{\abs{\tau_e} \sigma_0^2 \ln (d^2 / \delta)}$
with probability $\delta$. Then, since the minimum singular value must be upper bounded by the Frobenius norm: 
\[
\sigma_{\min} \left( \sum_{t\in\tau_e} \eps_t w_t^\top + w_t \tilde{\eps}_t^\top + \eps \tilde{\eps}^\top \right) \le \sqrt{\abs{\tau_e}\sigma_0^2 d^2 \ln(d^2 / \delta)}. 
\]

The conclusion is that with probability $1 - 2\delta$, 
\begin{align*}
\sigma_{\min} (\tX_e^\top X_e) &\ge \abs{\tau_e} \lambda_0 / 2 - \sqrt{\abs{\tau_e} \sigma_0^2 d^2 \ln (d^2 / \delta)} \\
&\ge \abs{\tau_e} \lambda_0 / 4. 
\end{align*}
\szcomment{add sometinghere, condition on the length of $\tau_e$}
This finishes the proof. 
\end{proof}

%\subsection{Upper Bounding $\norm{\tX_e \Gamma_e}$}
% The below Lemma is a standard concentration result, and can be found in, e.g., \cite{???}. 
% \begin{lemma}
% Let $\gamma_1, ... \gamma_t$ be sub-gaussian variables with variance $\sigma_0^2$. Let $x_1, \dots, x_t$ be vectors in $\mathbb{R}^d$ with each $x_{\tau}$ chosen as a function of $(x_1, gamma_1), \dots, (x_{\tau-1}, \gamma_{\tau_1})$ subject to $\norm{x_{\tau}} \le r$. With probability $1 - \delta$, 
% \[
% \norm{} <= \sqrt{2rs d t \ln(d t /\delta)}. 
% \]
% \end{lemma}

\begin{lemma}
\label{lemma:numerator}
%Recall $\gamma_t = y_t - \ip{x_t}{\theta}$. 
With probability $1 - \delta$, $\norm{\tX_e \Gamma_e} \le \sqrt{2 \sigma_0 K_0 d \abs{\tau_e} \ln (\abs{\tau_e} d / \delta)}$. 
\end{lemma}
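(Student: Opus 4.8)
\textbf{Proof plan for Lemma \ref{lemma:numerator}.}

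The plan is to recognize $\tX_e^\top \Gamma_e = \sum_{t \in \tau_e} \gamma_t \tx_t$ as a sum of the form covered by the vector concentration lemma in Appendix~\ref{app:concentration} (adapted from \cite{kannan2018smoothed}), and then verify the two hypotheses of that lemma: conditional subgaussianity of the scalar multipliers $\gamma_t$, and a uniform bound on the norms of the vectors $\tx_t$.

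First I would unpack $\gamma_t = y_t - \ip{x_t}{\theta^*}$. Conditioned on $a_t$, we have $\Ex[\gamma_t \mid a_t] = \ip{\theta^*}{a_t} - \ip{\Ex[x_t \mid a_t]}{\theta^*} = 0$ by the unbiasedness assumption and the linear model for $y_t$; moreover $\gamma_t$ is a fixed linear combination of the conditionally $\sigma_0$-subgaussian quantities $y_t$ and $x_t$, hence conditionally subgaussian with a parameter that is $\sigma_0$ up to a factor depending on $\norm{\theta^*}$ (absorbed into constants / the ambient boundedness). Crucially, $\tx_t$ and $\gamma_t$ are conditionally independent given $a_t$ (since $\tx_t \perp x_t \mid a_t$ and $y_t$ depends only on $a_t$), so $\tx_t$ plays the role of the "vector chosen as a function of the past" while $\gamma_t$ is the fresh subgaussian scalar — this is exactly the structure the concentration lemma requires, with the filtration taken to include $a_t$ and $\tx_t$. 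Second, I would bound $\norm{\tx_t}$: writing $\tx_t = a_t + \tilde\eps_t$ with $\norm{a_t}_2^2 \le K_0$ by the boundedness assumption and $\tilde\eps_t$ conditionally $\sigma_0$-subgaussian, a union bound over the $\abs{\tau_e}$ rounds gives $\norm{\tx_t} \le \sqrt{K_0} + O(\sigma_0 \sqrt{d \ln(\abs{\tau_e} d / \delta)})$ for all $t$ simultaneously with probability $1 - \delta/2$; on this event all the $\tx_t$ satisfy the norm bound $C$ demanded by the lemma.

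Applying the vector concentration lemma with $C$ as above and $\sigma = O(\sigma_0)$ then yields $\norm{\tX_e^\top \Gamma_e} \le \sqrt{2 d C \sigma \abs{\tau_e} \log(\abs{\tau_e} d / \delta)}$, which after substituting $C$ and collecting the $K_0$, $\sigma_0$ factors gives the claimed bound $\sqrt{2\sigma_0 K_0 d \abs{\tau_e} \ln(\abs{\tau_e} d / \delta)}$ (absorbing lower-order terms and constants into the stated form). I expect the only mild subtlety — not a real obstacle — to be the bookkeeping in chaining the two high-probability events (the uniform norm bound on $\tx_t$ and the concentration of the weighted sum) and confirming that the subgaussianity of $\gamma_t$ transfers cleanly through the fixed linear map $y_t - \ip{x_t}{\theta^*}$; both are standard, and the boundedness assumption on $\norm{a(\beta)}_2$ is what keeps the $\tx_t$-norm term from blowing up.
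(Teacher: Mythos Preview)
Your approach is essentially the same as the paper's: write $\tX_e^\top \Gamma_e = \sum_{t\in\tau_e}\gamma_t\tx_t$, note that $\gamma_t$ is mean-zero subgaussian and conditionally independent of $\tx_t$, and invoke the vector concentration lemma of Appendix~\ref{app:concentration}. The paper's proof is a two-line sketch that simply asserts $\gamma_t\tx_t$ is a zero-mean $\sigma_0 K_0$-subgaussian vector and appeals to that lemma; you are in fact more careful than the paper in separately securing a high-probability bound on $\|\tx_t\|$ (since only $\|a_t\|^2\le K_0$ is assumed, not a bound on $\tx_t$ itself), a point the paper glosses over.
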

\begin{proof}
Recall $\gamma_t = y_t - \ip{x_t}{\theta}$, and $\Gamma_e$ is the column vector collecting the variables $\gamma_t$. Note that $\gamma_t \tx_t$ is a zero-mean, $\sigma_0 K_0$-subgaussian random vector and independent conditioned on all previous steps. The result then follows from the standard concentration inequality (see \Cref{app:concentration}). 
% \begin{align*}
% y_t &= \ip{w_t}{\theta} + \eta \\
% &= \ip{x_t - \eps_t}{\theta} + \eta \\
% &= \ip{x_t}{\theta} - \ip{\eps_t}{\theta} + \eta
% \end{align*}
\end{proof}

%\subsection{Completing the Proof}
\begin{theorem}
With probability $1 - 1/T$, the total regret is $\tO(d / \lambda_0^2)$. 
\end{theorem}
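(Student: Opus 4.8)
The plan is to chain the three lemmas just proved with the per-epoch GMM bound \eqref{eq:GMMepoch}, turn the resulting parameter-estimation error into a per-epoch regret bound, and then sum over the $O(\log(T/d))$ epochs. Concretely, on the intersection of the good events of \Cref{lemma:denom} and \Cref{lemma:numerator}, plugging both estimates into \eqref{eq:GMMepoch} yields
\[
\norm{\hat\theta_e - \theta^*}_2 \;\le\; \frac{\sqrt{2\sigma_0 K_0\, d\, |\tau_e|\,\ln(|\tau_e| d/\delta)}}{|\tau_e|\,\lambda_0/4}\;=\;\frac{4\sqrt{2\sigma_0 K_0\, d\,\ln(|\tau_e| d/\delta)}}{\lambda_0\sqrt{|\tau_e|}},
\]
so that $\norm{\hat\theta_e-\theta^*}_2^2 \le \tO\!\big(d/(|\tau_e|\lambda_0^2)\big)$.

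Next I would convert this into regret. By \Cref{thm:uniformity} the optimal contract is $\beta^* = \theta^*/k$, and in every round $t$ of epoch $e$ the algorithm posts $\beta_t = \hat\theta_e/k$, so $\norm{\beta^*-\beta_t}_2^2 = \norm{\theta^*-\hat\theta_e}_2^2/k^2 \le \tO(d/(|\tau_e|\lambda_0^2))$. Since every agent's utility $u_t$ is concave with maximizer $\beta^*$ and bounded curvature there, $u_t(\beta^*)-u_t(\beta_t) = O(\norm{\beta^*-\beta_t}_2^2)$, so the total regret incurred during epoch $e$ is $|\tau_e|\cdot\tO(d/(|\tau_e|\lambda_0^2)) = \tO(d/\lambda_0^2)$ — \emph{independent of $e$}. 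Summing over the $O(\log(T/d))$ epochs gives $\tO(d\log T/\lambda_0^2) = \tO(d/\lambda_0^2)$; the $0$-th and first epochs have length $\tO(\max\{d, K_0/\lambda_0, \sigma_0^2 d^2/\lambda_0\})$ and hence add at most a term already hidden inside $\tO(\cdot)$. For the probability statement I would invoke \Cref{lemma:denom} and \Cref{lemma:numerator} in each epoch with failure parameter $\delta = 1/T^2$ (the extra $\ln T$ factors are absorbed into $\tO$) and union-bound over the $O(\log T)$ epochs and $O(1)$ invocations per epoch, for overall failure probability $O(\log T / T^2) \le 1/T$.

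Two points need care. First, an indexing mismatch: $\hat\theta_e$ is computed from the data recorded in epoch $e-1$, so the bounds above should be read with $|\tau_{e-1}|$ in place of $|\tau_e|$; since epoch lengths grow by a factor $2$, $|\tau_{e-1}| = \Theta(|\tau_e|)$ and only constants are affected. Second — and this is the one step I expect to require genuine thought rather than bookkeeping — \Cref{lemma:denom} relies on \Cref{assumption:diversity}, which is stated for contracts $\beta \in \mathcal{B}$, whereas the contracts the algorithm actually posts, $\hat\theta_e/k$, are only approximately feasible. The clean remedy is to add a projection of $\hat\theta_e/k$ onto $\mathcal{B}$ in \Cref{alg:diverse}: since $\beta^*\in\mathcal{B}$, projecting can only decrease $\norm{\beta^*-\beta_t}_2$, so the regret analysis is unchanged, while the posted contracts now lie in $\mathcal{B}$ and the diversity-driven lower bound on $\sigma_{\min}(\tX_e^\top X_e)$ applies. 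Once this is in place, the remainder is exactly the geometric-sum accounting already indicated in the proof sketch following the theorem.
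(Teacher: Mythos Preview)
Your proposal is correct and follows essentially the same route as the paper: combine \Cref{lemma:denom} and \Cref{lemma:numerator} via \eqref{eq:GMMepoch} to get $\norm{\hat\theta_e-\theta^*}_2^2 \le \tO(d/(|\tau_e|\lambda_0^2))$, multiply by $|\tau_e|$ to obtain per-epoch regret $\tO(d/\lambda_0^2)$, sum over $O(\log T)$ epochs, and union-bound with $\delta = O(1/T^2)$. The two additional points you flag (the $|\tau_{e-1}|$ vs.\ $|\tau_e|$ indexing and the projection onto $\mathcal{B}$ to ensure \Cref{assumption:diversity} applies) are genuine subtleties the paper's proof simply glosses over; your treatment is in fact more careful than the paper's on these fronts.
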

\begin{proof}
By \Cref{lemma:numerator}, \Cref{lemma:denom}, and \Cref{eq:GMMepoch}, with probability $1 - 3\delta$, the regret in each epoch can be upper bounded as $\abs{\tau_e}\cdot \tO(d / (\abs{\tau_e} \lambda_0^2)) = \tO(d / \lambda_0^2)$. 
Choose some appropriate $\delta$, e.g., $\delta < O(1 / T^2)$. Since there are at most $\log T$ epochs, the total regret will be $\tO(d / \lambda_0^2)$. 
\end{proof}

\section{NUMERICAL EXPERIMENTS}
% \begin{center}
% \begin{figure}[!htbp]
% \centering
% \begin{subfigure}{.5\textwidth}
%   \centering
%   \includegraphics[width=1.0\linewidth]{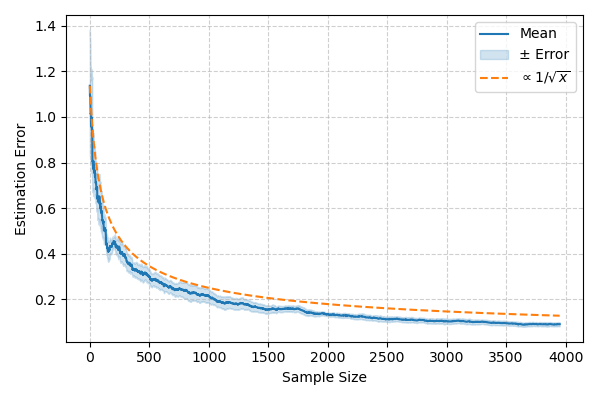}
%   \caption{}
%   \label{fig:sub1}
% \end{subfigure}%
% \begin{subfigure}{.5\textwidth}
%   \centering
%   \includegraphics[width=1.0\linewidth]{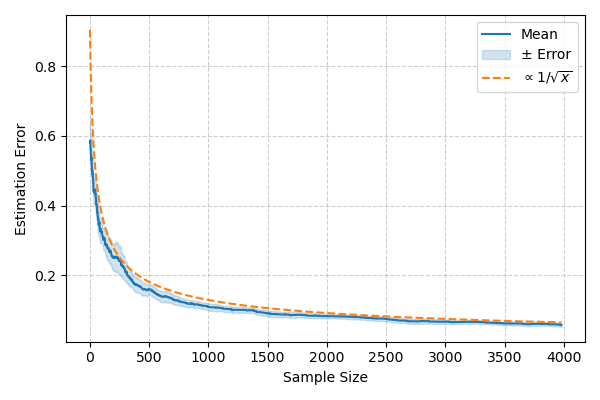}
%   \caption{}
%   \label{fig:sub2}
% \end{subfigure}
% \caption{The estimation error $\norm{\hat{\theta} - \theta^*}$ as sample size increases. (a): Using the contract as the instrumental variable and estimating $\hat{\theta}$ as $\hat{\theta} = (B_T X_T)^{-1} X_T Y_T$.  (\Cref{prop:offline-est-1}). (b): Using repeated observations as the instrumental variable and estimating $\hat{\theta}$ as $\hat{\theta} = (\tilde{X}_T X_T)^{-1} \tilde{X}_T Y_T$ (\Cref{prop:offline-est-repeated}). In both cases, we observe the estimation error decreases at a rate around $\Theta(1/\sqrt{T})$ (where $T$ is the sample size). }
% \label{fig:GMM_experiments}
% \end{figure}
% \end{center}

We perform numerical simulations that test the performance of the generalized method of moments based estimator.\footnote{Code can be found at \url{https://anonymous.4open.science/r/GMM_learning-8E3F/README.md}. } We choose $d = 5$ and $\theta^* = [1,2,3,4,5]$. We choose the agent's cost function as 
\[
c(a) = \sum_{i=1}^d \kappa_i a_i^2, 
\]
where each $\kappa_i$ is sampled uniformly from the set $\set{1,10}$. We study how the estimation error decreases as more samples are received. 

In the first setting, we use the contract as the instrumental variable and use the estimator as in Eq.~\Cref{eq:GMM_v1}. In the second setting, we assume repeated observations are available and use the estimator as in Eq.~\Cref{eq:GMM_repeated}. We plot the relationship in \Cref{fig:GMM_experiments}. As we can observe, the estimation error $\norm{\hat{\theta} - \theta^*}$ decreases at a rate around $O(1/\sqrt{T})$.

\clearpage
\begin{figure}[H]  % try [!t] or [!htb], avoid 'p' to prevent float pages
  \centering
  \begin{subfigure}{0.49\textwidth}
    \centering
    \includegraphics[width=\linewidth]{figures/GMM_iv1.png}
    \caption{}
    \label{fig:sub1}
  \end{subfigure}\hfill
  \begin{subfigure}{0.49\textwidth}
    \centering
    \includegraphics[width=\linewidth]{figures/GMM_repeated_new.png}
    \caption{}
    \label{fig:sub2}
  \end{subfigure}
  \caption{The estimation error $\norm{\hat{\theta} - \theta^*}$ as sample size increases. (a): Using the contract as the instrumental variable and estimating $\hat{\theta}$ as $\hat{\theta} = (B_T X_T)^{-1} X_T Y_T$.  (\Cref{prop:offline-est-1}). (b): Using repeated observations as the instrumental variable and estimating $\hat{\theta}$ as $\hat{\theta} = (\tilde{X}_T X_T)^{-1} \tilde{X}_T Y_T$ (\Cref{prop:offline-est-repeated}). In both cases, we observe the estimation error decreases at a rate around $\Theta(1/\sqrt{T})$ (where $T$ is the sample size).}
  \label{fig:GMM_experiments}
\end{figure}

\end{document}

% --- supplement: AISTATS2026PaperPack/supplement.tex ---

% If your paper is accepted and the title of your paper is very long,
% the style will print as headings an error message. Use the following
% command to supply a shorter title of your paper so that it can be
% used as headings.
%
\runningtitle{I use this title instead because the last one was very long}

% If your paper is accepted and the number of authors is large, the
% style will print as headings an error message. Use the following
% command to supply a shorter version of the authors names so that
% they can be used as headings (for example, use only the surnames)
%
%\runningauthor{Surname 1, Surname 2, Surname 3, ...., Surname n}

% Supplementary material: To improve readability, you must use a single-column format for the supplementary material.
\onecolumn
\aistatstitle{Instructions for Paper Submissions to AISTATS 2026: \\
Supplementary Materials}

\section{FORMATTING INSTRUCTIONS}

To prepare a supplementary pdf file, we ask the authors to use \texttt{aistats2026.sty} as a style file and to follow the same formatting instructions as in the main paper.
The only difference is that the supplementary material must be in a \emph{single-column} format.
You can use \texttt{supplement.tex} in our starter pack as a starting point, or append the supplementary content to the main paper and split the final PDF into two separate files.

Note that reviewers are under no obligation to examine your supplementary material.

\section{MISSING PROOFS}

The supplementary materials may contain detailed proofs of the results that are missing in the main paper.

\subsection{Proof of Lemma 3}

\textit{In this section, we present the detailed proof of Lemma 3 and then [ ... ]}

\section{ADDITIONAL EXPERIMENTS}

If you have additional experimental results, you may include them in the supplementary materials.

\subsection{Effect of the Regularization Parameter}

\textit{Our algorithm depends on the regularization parameter $\lambda$. Figure 1 below illustrates the effect of this parameter on the performance of our algorithm. As we can see, [ ... ]}

\vfill